\renewcommand{\NAT@open}{(}
\renewcommand{\NAT@close}{)}
\renewcommand{\hyper@natlinkbreak}[2]{#1}
\definecolor{codegreen}{rgb}{0,0.6,0}
\definecolor{codegray}{rgb}{0.5,0.5,0.5}
\definecolor{codepurple}{rgb}{0.58,0,0.82}
\definecolor{backcolour}{rgb}{0.95,0.95,0.92}
\lstdefinestyle{mystyle}{
    backgroundcolor=\color{backcolour},   
    commentstyle=\color{codegreen},
    keywordstyle=\color{magenta},
    numberstyle=\tiny\color{codegray},
    stringstyle=\color{codepurple},
    basicstyle=\ttfamily\footnotesize,
    breakatwhitespace=true,         
    breaklines=true,                 
    captionpos=b,                    
    keepspaces=true,                 
    numbers=left,                    
    numbersep=5pt,                  
    showspaces=false,                
    showstringspaces=false,
    showtabs=true,                  
    tabsize=2
}
\newtheorem{theorem}{Theorem}[section]
\newtheorem{proposition}{Proposition}[section]
\newtheorem{lemma}{Lemma}[section]
\newtheorem{assumption}{Assumption}[section]
\theoremstyle{definition}
\newtheorem{defn}{Definition}[section]
\theoremstyle{remark}
\newtheorem*{remark}{Remark}
\DeclareMathOperator*{\argmax}{arg\,max}
\DeclareMathOperator*{\argmin}{arg\,min}
\newcommand{\df}{\operatorname{df}}
\newcommand{\Cov}{\operatorname{Cov}}
\newcommand{\rank}{\operatorname{rank}}
\newcommand{\nc}{\newcommand}
\nc{\on}{\operatorname}
\nc{\Spec}{\on{Spec}}
\DeclarePairedDelimiter{\braces}{\lbrace}{\rbrace}
\DeclarePairedDelimiter{\paren}{(}{)}
\newcommand{\Z}{\mathbb{Z}}
\newcommand{\R}{\mathbb{R}}
\newcommand{\N}{\mathbb{N}}
\newcommand{\p}{\mathbb{P}}
\newcommand{\E}{\mathbb{E}}
\newcommand{\activeset}{\mathcal{M}}
\newcommand{\allowedset}{\mathcal{V}}
\newcommand{\bX}{\mathbf{X}}
\newcommand{\bbf}{\mathbf{f}}
\newcommand{\br}{\mathbf{r}}
\newcommand{\bx}{\mathbf{x}}
\newcommand{\by}{\mathbf{y}}
\newcommand{\bH}{\mathbf{H}}
\newcommand{\bP}{\mathbf{P}}
\newcommand{\bbeta}{\boldsymbol{\beta}}
\newcommand{\beps}{\boldsymbol{\varepsilon}}
\newcommand{\bSigma}{\boldsymbol{\Sigma}}
\newcommand{\weight}[2]{w_{#1}^{#2}}
\newcommand{\limitweight}[2]{\tilde{w}_{#1}^{#2}}
\newcommand{\llweight}[2]{\check{w}_{#1}^{#2}}
\title{Revisiting Randomization in Greedy Model Search}
\author{
  Xin Chen\thanks{Department of Operations Research and Financial Engineering, Princeton University, Sherrerd Hall, Princeton, NJ 08544. Email: \texttt{xc5557@princeton.edu}}
  \and Jason M. Klusowski\thanks{Department of Operations Research and Financial Engineering, Princeton University, Sherrerd Hall, Princeton, NJ 08544. Email: \texttt{jason.klusowski@princeton.edu}}
  \and Yan Shuo Tan\thanks{Department of Statistics and Data Science, National University of Singapore, Block S16, Level 7, 6 Science Drive 2, Singapore 117546. Email: \texttt{yanshuo@nus.edu.sg}}
  \and Chang Yu\thanks{Department of Operations Research and Financial Engineering, Princeton University, Sherrerd Hall, Princeton, NJ 08544. Email: \texttt{cy7438@princeton.edu}}
}
\date{}
\newenvironment{keywords}{\vspace{1em}\noindent\textbf{Keywords:} }{}
\newenvironment{acknowledgements}{\section*{Acknowledgements}}{}
\begin{document}
\maketitle
\begin{abstract}
Feature subsampling is a core component of random forests and other ensemble methods. While recent theory suggests that this randomization acts solely as a variance reduction mechanism analogous to ridge regularization, these results largely rely on base learners optimized via ordinary least squares. We investigate the effects of feature subsampling on greedy forward selection, a model that better captures the adaptive nature of decision trees. Assuming an orthogonal design, we prove that ensembling with feature subsampling can reduce both bias and variance, contrasting with the pure variance reduction of convex base learners. More precisely, we show that both the training error and degrees of freedom can be non-monotonic in the subsampling rate, breaking the analogy with standard shrinkage methods like the lasso or ridge regression. Furthermore, we characterize the exact asymptotic behavior of the estimator, showing that it adaptively reweights OLS coefficients based on their rank, with weights that are well-approximated by a logistic function. These results elucidate the distinct role of algorithmic randomization when interleaved with greedy optimization.
\end{abstract}
\begin{keywords}
Ensemble Learning; Forward Selection; Greedy Model Search; Random Forest; Regression
\end{keywords}
\section{Introduction}

Introduced by \citet{breiman2001random}, random forests (RFs) are ensembles of decision trees, each grown via greedy, recursive partitioning of the input space \citep{breiman1984classification}.
Each tree is independently randomized in two ways. First, it is trained on a bootstrap sample of the data. Second, at every internal node it considers only a randomly drawn subset of $m$ (out of $p$ total) features when searching for the next split.

This combination of aggregation and randomized tree construction has proven remarkably effective, making RFs among the most widely used machine learning methods today. Their performance is often close to state-of-the-art on tabular data \citep{caruana2006empirical,fernandez2014we,olson2018data,grinsztajn2022tree}, which is prevalent across healthcare, finance, and scientific domains.

There has been significant theoretical work on the consistency of RFs, especially in high dimensions~\citep{scornet2015consistency,klusowski2020sparse,syrgkanis2020estimation, chi2022asymptotic,klusowski2024large,mazumder2024convergence}, but there is not yet a complete consensus on how randomization improves prediction performance.
\citet{breiman2001random} originally explained the benefits of randomization as increasing diversity among the trees, thereby reducing the variance in their predictions.
Interestingly, in his concluding remarks, he speculated that randomization may ``act to reduce bias'' but that ``the mechanism for this is not obvious''.
These two explanations have had divergent fates, with \citet{hastie2009elements} arguing in their influential textbook that ``the improvements in prediction obtained by bagging or random forests are solely a result of variance reduction''.

Seminal work by \citet{mentch2020randomization} elaborated on this ``pure variance reduction'' hypothesis by offering a quantitative analysis of the role played by the feature subsampling ratio $\gamma = m/p$.
In their numerical experiments, they found that decreasing $\gamma$ decreases the degrees of freedom in RFs.
To further explore the generality of this phenomenon, they applied ensembling with feature subsampling to the greedy forward selection (FS) algorithm for sparse linear regression~\citep{draper1966applied}, finding that decreasing $\gamma$ also decreases the degrees of freedom of the resulting family of estimators.
Finally, in concurrent work with \citet{lejeune2020implicit}, they analyzed ensembling with feature subsampling on ordinary least squares (OLS) estimators. Such an ensemble remains a linear estimator, unlike ensembles of FS. Assuming orthonormal features, they found that the resulting regression coefficients were shrunk from the OLS values by a factor of $\gamma$, making the algorithm equivalent to ridge regression.
They hence concluded that feature subsampling ``serves as a form of
implicit regularization'' and that $\gamma$ ``serves much the same purpose as the shrinkage penalty in explicitly regularized regression procedures like lasso and ridge regression''.

Recent work building on \citet{mentch2020randomization} has proceeded in two directions.
In one direction, \citet{patil2024asymptotically} relaxed the orthonormal feature assumption to one they called ``asymptotic freeness'' (relative to the feature subsampling operator).
Under this assumption, they showed that ensembling OLS estimators with column-wise sampling (``feature subsampling'') is asymptotically equivalent to ridge regression, albeit with inflated variance.
Meanwhile, other works studying row-wise sampling (``data subsampling'') established a broad theoretical equivalence between this form of randomized ensembling and ridge penalization \citep{du2023subsample, patil2023generalized}.
Together, these works seem to strengthen the analogy between various forms of subsampling randomization and ridge-like shrinkage.

In the other direction, studies by \citet{curth2024random}, \citet{liu2025randomization}, \citet{mei2024exogenous}, and \citet{revelas2025random} suggest that feature subsampling can also reduce bias in RFs via two mechanisms. The first is enlarging the effective model class \citep{curth2024random}. The second is overcoming failures of greedy optimization \citep{liu2025randomization}. 
These results challenge the ``pure variance reduction'' hypothesis of \citet{hastie2009elements} and suggest that \emph{the shrinkage analogy in \citet{mentch2020randomization} seems appropriate when describing ensembles of base learners fit with convex optimization, but this analogy becomes strained when subsampling is interleaved with greedy optimization, as is the case for RFs}. Existing theory on subsampling-based ensembling primarily concerns non-adaptive or analytically tractable base learners arising from convex optimization, or oracle versions of RFs \citep{mei2024exogenous}. For settings in which feature subsampling is interleaved with greedy, adaptive fitting, evidence for bias-reduction mechanisms remains largely empirical.

The goal of our paper is to theoretically investigate the nuances of ensembling with feature subsampling that arise in this more complex scenario.
Inspired by \citet{mentch2020randomization}, we study their ensembling strategy as applied to FS rather than RFs.
This model captures the key adaptive, greedy behavior of RFs and allows for rigorous mathematical analysis that would otherwise be intractable.
We assume a fixed design setting, under which the expected training error captures the contribution of bias and the degrees of freedom quantify the estimator's sensitivity to noise and thus its variance component.
We further assume orthogonal features, 
in which case FS is equivalent to best subset selection (BSS, \citealt{hastie2020extended}).
Letting FS($k$) and EFS($k,m$) refer to $k$-step forward selection and its ensemble, respectively, we obtain the following theoretical results.
\begin{itemize}
    \item For $p$ large enough, the training error of EFS($k,m$) is non-monotonic in $m$, with a minimum achieved at an intermediate value.
    \item If the degrees of freedom of FS($k$) forms a concave sequence in $k$, then the degrees of freedom of EFS($k,m$) is increasing in $m$.
    \item If the concavity condition does not hold, then the degrees of freedom of EFS($k,m$) may be non-monotonic in $m$, with a minimum achieved at an intermediate value.
\end{itemize}

These results provide rigorous proof that feature subsampling randomization can simultaneously reduce both bias and variance.
On the other hand, variance reduction is not guaranteed but depends on certain structural assumptions.
Furthermore, in both ridge and lasso regression, the expected training error (resp.\ degrees of freedom) is monotonically increasing (resp.\ monotonically decreasing) in the regularization parameter $\lambda$~\citep{zou2007degrees,hastie2009elements}, yielding an exact trade-off between the two quantities.
Since this monotonic relationship does not hold for $m$ in EFS($k,m$), we see that shrinkage is not quite an appropriate analogy for the role of randomization in this setting.

In addition, focusing on the asymptotic regime $m,p \to \infty$ with $m/p \to \gamma$, we characterize the exact effect that $\gamma$ has on the regression coefficients of EFS($k,m$). Note that it is standard practice to take $m = \lfloor p/3 \rfloor $ in RFs, corresponding to $ \gamma = 1/3$.
Specifically, we show that EFS($k,m$) adaptively reweights OLS coefficients, creating a smooth approximation to the hard thresholding utilized by FS($k$).
These weights are well-approximated by a logistic function in the OLS coefficient rankings, with inflection point roughly equal to $k$ and growth rate depending on $\gamma$.

Varying $\gamma$ (equivalently, $m$) for a fixed $k$ can be thought of as reallocating a fixed weight budget among the different features, thereby searching over a larger family of models compared to FS($k$).
Since increasing $\gamma$ simultaneously shrinks some coefficients and inflates others, this reveals another difference from ridge and lasso regression, which shrink all coefficients towards zero.
Indeed, $\gamma$ has more affinity with the $\alpha$ parameter in elastic net \citep{zou2005regularization}, which interpolates between the $\ell_1$ and $\ell_2$ penalties.
However, the family of solutions obtained by EFS($k,m$) is not equivalent to that from elastic net, revealing yet more subtleties in the relationship between explicit regularization and ``algorithmic regularization''. 

We provide proof sketches for some of our main results. Full details appear in the Supplementary Material \citep{chen2025revisiting-supp}.

\section{Preliminaries}

\subsection{Set-up and notation}
Assume a generative model $y_i = f(\bx_i) + \varepsilon_i$, where $f(\bx) = \bbeta^\top\bx$ is a linear function and $\varepsilon_1,\varepsilon_2,\ldots,\varepsilon_n$ are IID noise variables with zero mean and variance~$\sigma^2$. Strictly speaking, we do not need to assume a well-specified linear model for our main results to hold, but we do so here for conceptual simplicity.
We assume $\bX = [\bx_1, \bx_2,\ldots,\bx_n]^\top \in \R^{n \times p}$ is a fixed design matrix.
For vectors $\mathbf{a}, \mathbf{b} \in \mathbb{R}^n$, we define the inner product as $
\langle \mathbf{a}, \mathbf{b} \rangle = \mathbf{a}^{\top}\mathbf{b}/n
$, with associated norm $\|\mathbf{a}\|^2 = \langle \mathbf{a}, \mathbf{a} \rangle$.
For each feature index $j \in [p]$, let $\bx_{\cdot j,p} = (x_{1j}, x_{2j}, \dots, x_{nj})^{\top}$ denote the $j$-th feature vector, which we assume to be normalized so that $\|\bx_{\cdot j,p}\| = 1$.
We use regular font to denote functions such as the coordinate functions $x_{j,p}$.

\subsection{Algorithms}

\paragraph{Forward selection.}
Starting with an empty active set $\activeset_0 = \emptyset$, forward selection (FS) proceeds for $l = 1,2,\dots,k$ by updating $\activeset_{l-1}$ with the index
\begin{equation}\label{eq:fs_objective}
    j_l = 
    \argmin_{j \in \allowedset_l}
    \|\by - \bP_{\activeset_{l-1} \cup \{j\}}\by\|
    = 
    \argmax_{j \in \allowedset_l}
    \frac{|\langle \br_{l-1}, \bx_{\cdot j,p} \rangle|}
         {\|\bP_{\activeset_{l-1}}^{\perp}\bx_{\cdot j,p}\|},
\end{equation}
where $\allowedset_l = [p]\setminus \activeset_{l-1}$ is the set of eligible indices.
Here, $\bP_{\activeset_l}$ is the projection matrix onto the column span of the selected features, $\bP_{\activeset_l}^{\perp}$ projects onto its orthogonal complement, and
$\br_{l-1} = \by - \hat{\bbf}_{l-1,p}$ denotes the residual vector at step $l-1$.
We set
\[
\hat{\bbf}_{l,p} = \bP_{\activeset_{l}}\by.
\]
After $k$ steps, the resulting predictive function $\hat{f}_{k,p}$ is referred to as the FS($k$) estimator.

\paragraph{Ensemble forward selection.}
Following \cite{mentch2020randomization}, we introduce randomization into each FS($k$) step by modifying~\eqref{eq:fs_objective} so that the maximization is taken over a randomly drawn subset.
Specifically, at step $l$, we sample a candidate set $\allowedset_l \subseteq [p]\setminus \activeset_{l-1}$ of size $m$ uniformly without replacement. If $|[p]\setminus \activeset_{l-1}| < m$, we use all remaining features without randomization.
The update index $j_l$ is then chosen as in~\eqref{eq:fs_objective}, but restricted to $j \in \allowedset_l$.
Stopping after $k$ steps yields a randomized FS($k$) estimator $\tilde{f}_{k,p,b}$ for replicate $b$.
Repeating this process independently for $b = 1,\dots,B$ and aggregating via simple averaging yields the ensemble estimator
\[
\hat{f}_{k,m,p,B}
= 
\frac{1}{B}\sum_{b=1}^B \tilde{f}_{k,p,b}.
\] 
By the strong law of large numbers, the estimator $ \hat f_{k,m,p,B} $ converges almost surely to its expected value as $B\to\infty$, with the expectation taken over the randomness of the subset choices. 
For the rest of this paper, we use and refer to this limit model as the EFS($k,m$) estimator, denoting it as $\hat f_{k,m,p}$.

\subsection{Prediction error decomposition}

In a fixed design setting, the complexity of an estimator $\hat f$ is often quantified via its \emph{degrees of freedom} \citep{hastie2009elements},
\begin{equation}
    \df(\hat f) = \sum_{i=1}^n \frac{\Cov( y_i,\hat f(\bx_i))}{\sigma^2}.
\end{equation}
Letting $\bbf = \bX\bbeta$ and $\hat\bbf
= \bX\hat{\bbeta}$ denote the true and fitted function values respectively, the expected mean squared prediction error 
of $\hat f$ can be decomposed into its expected training error and a constant multiple of its degrees of freedom,
\begin{equation}
\label{eq:expected_error}
    \E\big[\Vert \mathbf{f}- \hat{\mathbf{f}}\Vert^2\big] + \sigma^2 = \E\big[\Vert \mathbf{y}- \hat{\mathbf{f}}\Vert^2\big] + \frac{2\sigma^2}{n}\df(\hat{f}).
\end{equation}
We will therefore analyze how EFS($k,m$) compares to FS($k$) on each term on the right-hand side of \eqref{eq:expected_error}.

\section{Properties of EFS($k,m$) weights}
\label{sec:ensemble_weight_properties}

\subsection{EFS($k,m$) as reweighted OLS}

For the rest of this paper, we assume that the features are orthogonal.

\begin{assumption}[Orthogonality] \label{ass:orthogonal} 
The design matrix satisfies $\bX^{\top}\bX/n = \mathbf{I}$.
\end{assumption}

This assumption is unlikely to describe most real datasets, but it allows us to derive exact asymptotic expressions for coefficient behavior and precisely characterize how the randomization from feature subsampling propagates through the FS($k$) algorithm. This enables direct comparisons with the broader regularization landscape, including ridge, lasso \citep{tibshirani1996regression}, elastic net \citep{zou2005regularization}, LARS \citep{efron2004least}, SCAD \citep{fan2001variable}, MCP \citep{zhang2010nearly}, and SureShrink \citep{donoho1995adapting}, all of which have been studied in the orthogonal setting.

Let $\hat f_{\operatorname{OLS}} = \sum_{j=1}^p \hat\beta_{j, p} x_{j,p}$ denote the ordinary least squares (OLS) estimator for $f$.
Under Assumption \ref{ass:orthogonal}, its coefficients can be expressed as inner products $\hat \beta_{j,p} = \langle \mathbf{y}, \bx_{\cdot j,p}\rangle $.
It turns out that both the FS($k$) and EFS($k,m$) estimators can be written in terms of these OLS coefficients.
For ease of notation, we establish an ordering convention for the features $x_{1,p},x_{2,p},\ldots,x_{p,p}$ based on these coefficients.

\begin{defn}[Feature ordering]
\label{def:betaj}
The inner products $ \hat \beta_{j,p} = \langle \mathbf{y}, \bx_{\cdot j,p}\rangle  $ are reindexed and sorted by magnitude in descending order,
\begin{equation} \label{eq:ordering}
    |\hat\beta_{1,p}| \geq |\hat\beta_{2,p}| \geq \ldots \geq |\hat\beta_{p,p}|.
\end{equation}
\end{defn}
Note that this ordering is random as it depends on the response noise.
For simplicity, we assume that the inequalities in \eqref{eq:ordering} are strict apart from a set of measure zero, which holds whenever the noise variables have a continuous density.

Using Definition \ref{def:betaj} together with 
the equivalence of FS($k$) and BSS, we can write the FS($k$) estimator as
\begin{equation}
    \label{eq:decomp_fs}
\hat f_{k,p} = \sum_{j=1}^p \mathbf{1}(j \leq k)\hat\beta_{j,p} x_{j,p},
\end{equation}
which reflects the familiar fact that, in the orthogonal setting, FS($k$) (and BSS) selects the top-$k$ features most correlated with the response.
The indicators $\mathbf{1}(j \leq k)$ can be thought of as weights that modulate the OLS solution.
Similarly, the $k$-step ensemble estimator has the form
\begin{equation}
    \label{eq:decomp_ensemble}
    \hat f_{k,m,p} = \sum_{j=1}^p \weight{j}{k,m,p}\hat\beta_{j,p} x_{j,p},
\end{equation}
where the weights $\weight{1}{k,m,p}, \weight{2}{k,m,p},\ldots,\weight{p}{k,m,p}$ are determined solely by structural parameters ($j, k, m, p$) rather than the actual coefficient values $\hat\beta_{j,p}$. As $m,p \rightarrow\infty$ with a fixed scaling ratio, we will show that these weights are closely approximated by a logistic function with midpoint $k+1/2$ and growth rate $-\log(1-m/p)$,
\begin{equation} \label{eq:logistic}
\weight{j}{k,m,p} \approx \frac{1}{1+\big(1-\frac{m}{p}\big)^{k-j+1/2}}.
\end{equation}

\subsection{Basic properties}

Since $\weight{j}{k,m,p} = \mathbb{P}( x_{j, p} \in \activeset_k)$, the weight sequence clearly takes values in $[0,1]$ and sums to $k$, the effective number of variables in the model.
Unfortunately, the exact values of the weights involve complicated combinatorics and do not have simple closed-form expressions in general.
Nonetheless, one can show that the sequence is non-increasing and, more importantly, derive a recurrence relation in terms of $j$, $k$, and $p$, which serves as our main technical tool for the subsequent results.

\begin{lemma}[Monotonicity]
\label{lem:weights_monotonicity}
    For all $j < j'$ and $k,m,p \geq 1$, we have $\weight{j}{k,m,p} \geq \weight{j'}{k,m,p}$.
\end{lemma}

\begin{lemma}[Recurrence relation]
\label{prop:finitep}
 The weights satisfy the recurrence relation
 \begin{equation}
 \label{eq:r_finite}
 \weight{j}{k,m,p} = \frac{\binom{p-j}{m-1}}{\binom{p}{m}} + \frac{\binom{p-j}{m}}{\binom{p}{m}}\weight{j}{k-1,m, p-1} +  \left(1-\frac{\binom{p-j+1}{m}}{\binom{p}{m}}\right)\weight{j-1}{k-1, m,p-1},
 \end{equation}
where $\weight{j}{0, m, p} = \weight{j}{k, m, 0} = \weight{0}{k, m, p} = 0$.
\end{lemma}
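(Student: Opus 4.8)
The plan is to give the weights $\weight{j}{k,m,p}$ a clean combinatorial meaning and then derive the recurrence by conditioning on the outcome of the first greedy step. First I would record the probabilistic interpretation. Under Assumption \ref{ass:orthogonal}, for any index set $S$ the OLS fit on $\bX_S$ equals $\sum_{j\in S}\hat\beta_{j,p}x_{j,p}$, and the FS objective \eqref{eq:fs_objective} restricted to a candidate set collapses to ``pick the unselected feature of largest $|\hat\beta|$ in the set'': indeed, for $j\notin\activeset_{l-1}$ orthogonality gives $\langle\br_{l-1},\bx_{\cdot j,p}\rangle=\langle\by,\bx_{\cdot j,p}\rangle=\hat\beta_{j,p}$ and $\|\bP^{\perp}_{\activeset_{l-1}}\bx_{\cdot j,p}\|=1$. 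Hence, in the $B\to\infty$ limit, the strong law of large numbers yields $\hat f_{k,m,p}=\sum_j \p(j\in\activeset_k)\,\hat\beta_{j,p}x_{j,p}$, where the probability is over the subset draws (conditional on the data). Comparing with \eqref{eq:decomp_rgs} identifies $\weight{j}{k,m,p}=\p(\text{feature }j\in\activeset_k)$; since ties are excluded, this probability depends on $(j,k,m,p)$ and the induced ranking only, not on the coefficient magnitudes, confirming that the weights are structural. The underlying process is purely combinatorial: items $1<2<\cdots<p$ (rank $1$ $=$ largest coefficient), and in each of $k$ rounds we draw a uniform $m$-subset $\allowedset$ of the remaining items, select $\min\allowedset$, and delete it.

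Next I would condition on $s_1\coloneqq\min\allowedset_1$, the item chosen in round one, splitting into $s_1=j$, $s_1>j$, and $s_1<j$. If $s_1=j$ then feature $j$ is selected, and $\p(s_1=j)=\binom{p-j}{m-1}/\binom{p}{m}$ (include $j$, pick $m-1$ of the $p-j$ lower-ranked items). If $s_1>j$ then $\allowedset_1\subseteq\{j+1,\dots,p\}$, with probability $\binom{p-j}{m}/\binom{p}{m}$; feature $j$ survives, and after deleting a higher-ranked item the remaining $p-1$ items still have exactly $j-1$ items above $j$, so the residual process is a fresh copy of the $(k-1,m,p-1)$ process with $j$ again in the role of rank $j$, contributing $\weight{j}{k-1,m,p-1}$. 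If $s_1<j$ then $\allowedset_1\not\subseteq\{j,\dots,p\}$, with probability $1-\binom{p-j+1}{m}/\binom{p}{m}$; feature $j$ survives with only $j-2$ items above it, so it plays the role of rank $j-1$ in the $(k-1,m,p-1)$ process, contributing $\weight{j-1}{k-1,m,p-1}$. Summing the three cases weighted by their probabilities gives \eqref{eq:r_finite}, and Pascal's rule $\binom{p-j}{m-1}+\binom{p-j}{m}=\binom{p-j+1}{m}$ verifies exhaustiveness. The boundary values follow at once: $\weight{j}{0,m,p}=0$ (empty active set), $\weight{j}{k,m,0}=0$ (no items), and $\weight{0}{k,m,p}=0$ is a harmless convention since at $j=1$ the coefficient $1-\binom{p-j+1}{m}/\binom{p}{m}$ vanishes.

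The one point requiring genuine care — and the step I expect to be the main obstacle — is the Markov-type reduction in the second and third cases: one must check that, conditional on the round-one outcome, the joint law of (the surviving pool, the later candidate draws, the later selections) is precisely that of the smaller $(k-1,m,p-1)$ process under the stated relabeling of $j$. This holds because $\allowedset_1$ is drawn independently of all future draws and, conditionally, the set of surviving items is a uniformly random $(p-1)$-subset with the same relative ranking; but I would spell this out carefully, as the rank shift ($j\mapsto j$ versus $j\mapsto j-1$) is exactly where an off-by-one error would creep in. I would also note that the recurrence as written presumes $m\le p$ and that every pool encountered has size at least $m$; the degenerate regime $m\ge p$, where RGS coincides with FS and $\weight{j}{k,m,p}=\mathbf 1(j\le k)$, is verified directly.
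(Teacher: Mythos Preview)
Your proposal is correct and follows essentially the same approach as the paper: both condition on the feature selected at the first step and split into the cases $s_1=j$, $s_1>j$, $s_1<j$, reducing the latter two to the $(k-1,m,p-1)$ process with rank $j$ or $j-1$. The only cosmetic difference is that the paper computes $\p(J>j)$ and $\p(J<j)$ by summing $\p(J=i)=\binom{p-i}{m-1}/\binom{p}{m}$ over the relevant ranges and invoking the hockey-stick identity, whereas you obtain these probabilities directly as $\binom{p-j}{m}/\binom{p}{m}$ and $1-\binom{p-j+1}{m}/\binom{p}{m}$ via the ``all $m$ draws lie in a tail'' argument---a slightly cleaner route that sidesteps the identity. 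One small caution: your phrase ``the set of surviving items is a uniformly random $(p-1)$-subset'' is not accurate (the removed item is the minimum of $\allowedset_1$, not uniform), but what you actually need---and correctly use---is only that $j$'s \emph{rank} among the survivors is deterministic given the sign of $s_1-j$, which suffices because the weights are structural.
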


\begin{proof}[Proof sketch of Lemma~\ref{lem:weights_monotonicity}]
Define the events $\mathcal{E}_1 = \{x_{j,p} \in \mathcal{M}_k,\, x_{j-1,p} \notin \mathcal{M}_k\}$ and $\mathcal{E}_2 = \{x_{j-1,p} \in \mathcal{M}_k,\, x_{j,p} \notin \mathcal{M}_k\}$. It suffices to show $\mathbb{P}(\mathcal{E}_1) \leq \mathbb{P}(\mathcal{E}_2)$.

We construct a measure-preserving map $T$ from $\mathcal{E}_1$ to $\mathcal{E}_2$ as follows. Given a random seed $\boldsymbol{\allowedset} \in \mathcal{E}_1$ where $x_{j,p}$ is selected at step $i$, define $T(\boldsymbol{\allowedset})$ by swapping the roles of $x_{j,p}$ and $x_{j-1,p}$ in the candidate sets from step $i$ onward. Since the algorithm depends only on the ordinal structure of coefficients, this transformation preserves probabilities and maps $\mathcal{E}_1$ into $\mathcal{E}_2$, establishing the desired inequality.
\end{proof}

\begin{proof}[Proof sketch of Lemma~\ref{prop:finitep}]
Let $J$ denote the index of the feature selected at the first step. The key observation is that conditioning on $J$ reduces the problem to a smaller instance of EFS.

If $J = j$, then $x_{j,p} \in \mathcal{M}_k$ with certainty. If $J \neq j$, the remaining $k-1$ steps are equivalent to running EFS on $p-1$ features with reassigned indices. When $J > j$, the index $j$ remains unchanged, contributing $\weight{j}{k-1,m,p-1}$. When $J < j$, the index shifts to $j-1$, contributing $\weight{j-1}{k-1,m,p-1}$.

Computing $\mathbb{P}(J = j)$, $\mathbb{P}(J > j)$, and $\mathbb{P}(J < j)$ via combinatorics (specifically, the probability that $j$ is the minimum index in a uniformly random $m$-subset) and applying the hockey-stick identity to simplify the resulting sums yields the recurrence.
\end{proof}

\subsection{Logistic approximation}
The recurrence \eqref{eq:r_finite} is three-dimensional (in $j$, $k$, and $p$) and non-homogeneous, making it challenging to find a simple closed-form solution. 
However, in the scaling limit as $m \to \infty$, $p \to \infty$, and $m/p \to \gamma$ for some constant $\gamma \in [0, 1]$, the recurrence converges to a two-dimensional system that further decouples into a one-dimensional subsystem with an explicit analytical solution.

\begin{theorem}[Asymptotic weights for fixed $k$]
    \label{thm:asymp}
    The limiting weights $\limitweight{j}{k,\gamma} =  \lim\limits_{\substack{m \to \infty,\, p \to \infty, \\ m/p \to \gamma}} \weight{j}{k,m,p}$
    \begin{enumerate}[label=(\alph*)]
        \item exist and equal
        \begin{equation}
        \label{eq:closed_wj}
            \sum_{i=1}^{k} (-1)^{k-i}\prod_{l=i}^k\big(e^{-\alpha(j-l)}-e^{-\alpha j}\big),
        \end{equation}
\item satisfy the bounds
\begin{equation}
\label{eq:bounds_wj}
    \frac{1}{1+e^{-\alpha\left(h(\alpha,k)-j\right)}} \leq \limitweight{j}{k,\gamma} \leq \frac{1}{1+e^{-\alpha\left(h(\alpha,k)+1-j\right)}},
\end{equation}
where $h(\alpha,k) = \frac{1}{\alpha}\log(e^{\alpha k} - 1)$ and 
$\alpha \in (0, \infty)$ is defined via $\gamma = 1 - e^{-\alpha}$.
\end{enumerate}
\end{theorem}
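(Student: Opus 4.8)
The plan is to push the finite-$p$ recurrence of Lemma~\ref{prop:finitep} to the scaling limit $m,p\to\infty$, $m/p\to\gamma=1-e^{-\alpha}$, identify the limiting recursion, solve it, and then sandwich the solution. First, since $j$ and $k$ are held fixed, each of the three coefficients in \eqref{eq:r_finite} is a ratio of falling factorials with boundedly many factors, hence converges term by term:
\[
\frac{\binom{p-j}{m-1}}{\binom{p}{m}}\to e^{-\alpha(j-1)}-e^{-\alpha j},\qquad
\frac{\binom{p-j}{m}}{\binom{p}{m}}\to e^{-\alpha j},\qquad
\frac{\binom{p-j+1}{m}}{\binom{p}{m}}\to e^{-\alpha(j-1)}.
\]
Using these I would show by induction on $k$ that $\limitweight{j}{k,\gamma}$ exists: the base case $k=1$ is immediate because \eqref{eq:r_finite} then collapses to $\weight{j}{1,m,p}=\binom{p-j}{m-1}/\binom{p}{m}$, and for the step one notes that $m/(p-1)\to\gamma$ as well, so the inductive hypothesis applies to the terms $\weight{j}{k-1,m,p-1}$ and $\weight{j-1}{k-1,m,p-1}$. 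Taking limits in \eqref{eq:r_finite} then shows the limit solves
\[
\limitweight{j}{k,\gamma}=\bigl(e^{-\alpha(j-1)}-e^{-\alpha j}\bigr)+e^{-\alpha j}\limitweight{j}{k-1,\gamma}+\bigl(1-e^{-\alpha(j-1)}\bigr)\limitweight{j-1}{k-1,\gamma},
\]
with $\limitweight{j}{0,\gamma}=\limitweight{0}{k,\gamma}=0$ --- the two-dimensional system referred to before the theorem.

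For part~(a), the key observation is that this system decouples in $j$: for each fixed $j$ the limiting weights satisfy the one-step affine recursion $\limitweight{j}{k,\gamma}=\bigl(e^{-\alpha(j-k)}-e^{-\alpha j}\bigr)\bigl(1-\limitweight{j}{k-1,\gamma}\bigr)$ with $\limitweight{j}{0,\gamma}=0$, and unrolling it in $k$ produces exactly the alternating product \eqref{eq:closed_wj}. To establish this I would verify by induction on $k$ that the right-hand side of \eqref{eq:closed_wj} satisfies the two-dimensional recursion above and its boundary conditions --- a finite algebraic identity --- and then invoke uniqueness of the forward-in-$k$ recursion.

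For part~(b), I would pass to the complementary weights $u_j^k:=1-\limitweight{j}{k,\gamma}$, which by the recursion above obey the homogeneous relation $u_j^k=e^{-\alpha j}u_j^{k-1}+\bigl(1-e^{-\alpha(j-1)}\bigr)u_{j-1}^{k-1}$, $u_j^0=1$ (with $u_1^k=e^{-\alpha}u_1^{k-1}$, since the $j=1$ case carries no $u_0$ term). Substituting $\limitweight{j}{k,\gamma}=1-u_j^k$ and using $e^{-\alpha h(\alpha,k)}=(e^{\alpha k}-1)^{-1}$, the claimed bounds \eqref{eq:bounds_wj} are equivalent to
\[
\frac{e^{\alpha(j-1)}}{e^{\alpha(j-1)}+e^{\alpha k}-1}\ \le\ u_j^k\ \le\ \frac{e^{\alpha j}}{e^{\alpha j}+e^{\alpha k}-1}\qquad(j,k\ge1),
\]
which I would prove by induction on $k$, simultaneously over all $j$. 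The case $k=0$ reads $u_j^0=1$, which matches both ends; for the inductive step, inserting the $(k-1)$-bounds into the recursion and clearing denominators reduces each inequality to a polynomial inequality in $e^\alpha$ that, after the cancellations afforded by $e^{-\alpha j}e^{\alpha j}=1$, factors into manifestly nonnegative pieces (a perfect square times a positive quantity), while the degenerate $j=1$ case is checked directly (there $u_1^k=e^{-\alpha k}$, and the bounds are tight).

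The main obstacle is navigating the non-homogeneous three-dimensional recurrence \eqref{eq:r_finite}: showing the limit exists at all, and --- more importantly --- recognizing the decoupling into the one-dimensional recursion, which is what makes the closed form \eqref{eq:closed_wj} fall out. The bounds look like the hardest part but become a mechanical induction once one passes to the complementary weights $u_j^k$, whose recursion is homogeneous; the only care needed there is the degenerate $j=1$ boundary.
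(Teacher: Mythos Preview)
Your outline is correct and for the existence of the limit and for part~(a) it is essentially what the paper does: pass the binomial ratios to the limit, induct on $k$, and read off the closed form from the one-dimensional recursion $\limitweight{j}{k,\gamma}=(e^{-\alpha(j-k)}-e^{-\alpha j})(1-\limitweight{j}{k-1,\gamma})$. The paper differs slightly in that, rather than plugging \eqref{eq:closed_wj} back into the two-dimensional recursion, it first proves that the two-dimensional system is equivalent to a \emph{pair} of decoupled one-dimensional recursions --- the $k$-recursion above together with a companion $j$-recursion $\limitweight{j}{k,\gamma}=1-e^{-\alpha k}-(e^{-\alpha(k-j+1)}-e^{-\alpha k})\limitweight{j-1}{k,\gamma}$ --- by an induction in which each recursion is used to derive the other.

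For part~(b) your route genuinely differs from the paper's. You pass to the complements $u_j^k=1-\limitweight{j}{k,\gamma}$, note their homogeneous two-term recursion, and push the logistic envelopes through by induction on $k$; this works (the induction boundaries match at $k=0$ and the $j=1$ case is explicit), but the inductive step requires clearing denominators and verifying a three-variable polynomial inequality, which is tight as $\alpha\to 0$ and hence not entirely routine. The paper avoids this computation altogether: it combines the decoupled $k$-recursion with the monotonicity $\limitweight{j}{k-1,\gamma}\le\limitweight{j}{k,\gamma}$ --- which follows immediately from the probabilistic interpretation $\limitweight{j}{k,\gamma}=\lim\mathbb{P}(x_{j,p}\in\mathcal{M}_k)$ --- to obtain the self-bounding inequalities
\[
(e^{-\alpha(j-k)}-e^{-\alpha j})(1-\limitweight{j}{k,\gamma})\le\limitweight{j}{k,\gamma}\le(e^{-\alpha(j-k-1)}-e^{-\alpha j})(1-\limitweight{j}{k,\gamma}),
\]
from which \eqref{eq:bounds_wj} drops out by solving a linear inequality. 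The paper's argument is shorter and more conceptual; yours is more direct but trades the probabilistic monotonicity for algebra.
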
 

\begin{proof}[Proof sketch]
The argument proceeds in four steps.

\emph{Step 1.} We derive the limiting recurrence.
Taking $m, p \to \infty$ with $m/p \to \gamma$ in the coefficients of \eqref{eq:r_finite} yields
\[
\frac{\binom{p-j}{m-1}}{\binom{p}{m}} \to (e^\alpha - 1)e^{-\alpha j}, \quad
\frac{\binom{p-j}{m}}{\binom{p}{m}} \to e^{-\alpha j}, \quad
1 - \frac{\binom{p-j+1}{m}}{\binom{p}{m}} \to 1 - e^{-\alpha(j-1)},
\]
where $\alpha = -\log(1-\gamma)$. This produces the limiting recurrence
\[
\limitweight{j}{k,\gamma} = (e^\alpha - 1)e^{-\alpha j} + e^{-\alpha j}\limitweight{j}{k-1,\gamma} + (1 - e^{-\alpha(j-1)})\limitweight{j-1}{k-1,\gamma}.
\]
Existence of the limits follows by induction on $k$, using $\limitweight{j}{0,\gamma} = 0$ as the base case.

\emph{Step 2.} We decouple the recurrence.
Through algebraic manipulation, this two-dimensional recurrence separates into two independent one-dimensional recurrences,
\begin{align*}
\limitweight{j}{k,\gamma} &= (e^{-\alpha(j-k)} - e^{-\alpha j})(1 - \limitweight{j}{k-1,\gamma}), \\
\limitweight{j}{k,\gamma} &= 1 - e^{-\alpha k} - (e^{-\alpha(k-j+1)} - e^{-\alpha k})\limitweight{j-1}{k,\gamma}.
\end{align*}

\emph{Step 3.} We solve for the closed form.
The first recurrence is a first-order linear relation in $k$ with non-constant coefficients. Solving it directly yields the closed form \eqref{eq:closed_wj}.

\emph{Step 4.} We establish the bounds.
The weights satisfy $\limitweight{j}{k,\gamma} \leq \limitweight{j}{k+1,\gamma}$ (more steps can only increase inclusion probability). Combining this monotonicity with the first decoupled recurrence gives the lower bound. The upper bound follows by a symmetric argument using $\limitweight{j}{k,\gamma} \geq \limitweight{j}{k-1,\gamma}$.
\end{proof}

The bounds in \eqref{eq:bounds_wj} reveal that the weights are sandwiched between two logistic functions with midpoints $h(\alpha,k)$ and $h(\alpha,k)+1$ and common growth rate $\alpha = - \log(1-\gamma)$.
When $\alpha k$ is large and $\gamma$ is small, we have $h(\alpha,k) \approx k$ and $\alpha \approx \gamma$, respectively.

These approximations clarify how the EFS($k,m$) parameters affect the shape of the weight function. (If sampling is performed \emph{with} replacement, all asymptotic results hold with $\alpha$ replaced by $\gamma$.)
As $k$ increases, the logistic curve translates rightward. As $\gamma$ increases (larger $m$), the curve becomes steeper, more closely approximating the sharp thresholding of FS($k$).
Figure \ref{fig:ensemble_comparison} illustrates this behavior for $\gamma \in \{0.05, 0.15, 0.333\}$.

\begin{figure}
    \centering
    \includegraphics[width=0.6\linewidth]{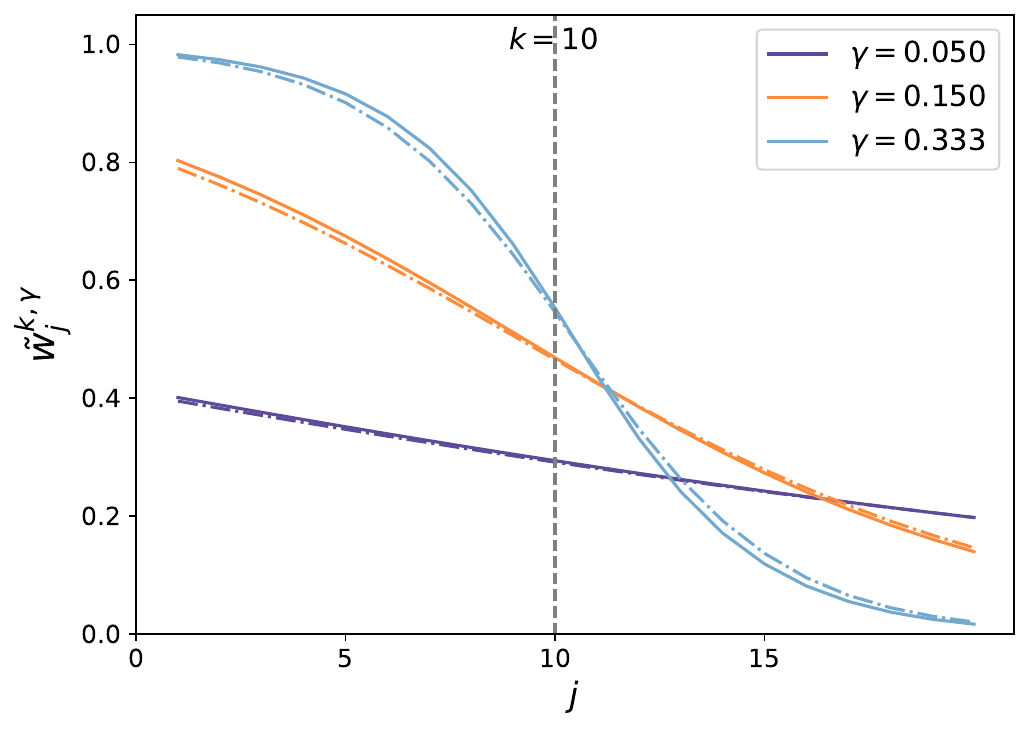}
    \caption{Comparison of $\limitweight{j}{k,\gamma}$ (solid) and the logistic approximation \eqref{eq:logistic} (dashed)
    for $\gamma=0.05, 0.15,$ and $0.333$, with $k=10$.} 
    \label{fig:ensemble_comparison}
\end{figure}

The logistic approximation becomes even more precise, differing only by a half-index shift rather than a full-index shift, when the selection threshold $k$ also tends to infinity.
We capture this by introducing the ``offset'' parameter $d = k - j$, measuring distance from the selection threshold.

\begin{theorem}[Asymptotic weights for large $k$]
     \label{thm:wd}
     The limiting weights $\llweight{d}{\gamma} = \lim\limits_{\substack{k\to\infty}}\lim\limits_{\substack{m\to\infty,\,p\to\infty,\\m/p\to \gamma}}\weight{k-d}{k,m,p}$
     \begin{enumerate}[label=(\alph*)]
     \item exist and equal
     \begin{equation}
         \label{eq:closed_wd}
          \sum_{i=0}^{\infty}(-1)^{i}e^{-\alpha i(d+i/2+1/2)},     
     \end{equation}
     \item satisfy the symmetry property $\llweight{d}{\gamma} = 1-\llweight{-(d+1)}{\gamma}$,
     \item satisfy the bounds
     \begin{enumerate}[label=(\roman*)]
     \label{eq:bounds_wd}
        \item $ \frac{1}{1+e^{-\alpha d}}  \leq \llweight{d}{\gamma} \leq \frac{1}{1+e^{-\alpha(d+1/2)}}, \quad d < 0 $,
         \item   $ \frac{1}{1+e^{-\alpha(d+1/2)}}  \leq \llweight{d}{\gamma} \leq \frac{1}{1+e^{-\alpha(d+1)}}, \quad d \geq 0$,
     \end{enumerate}
     \item have a sigmoidal (S-shaped) pattern with respect to $ d $.
     \end{enumerate}
\end{theorem}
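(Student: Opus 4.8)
The plan is to obtain everything from a single one-dimensional recurrence for $\llweight{d}{\gamma}$ derived by passing to the limit in Lemma \ref{prop:finitep}. First I would take the scaling limit $m,p\to\infty$ with $m/p\to\gamma$ in \eqref{eq:r_finite}. Writing $\gamma = 1-e^{-\alpha}$, the binomial ratios have the asymptotics $\binom{p-j}{m}/\binom{p}{m}\to e^{-\alpha j}=(1-\gamma)^j$ and $\binom{p-j}{m-1}/\binom{p}{m}\to 0$ (the first term is lower order), so the recurrence collapses to the two-dimensional homogeneous system
\begin{equation}
\limitweight{j}{k,\gamma} = e^{-\alpha j}\,\limitweight{j}{k-1,\gamma} + \bigl(1-e^{-\alpha(j-1)}\bigr)\,\limitweight{j-1}{k-1,\gamma},
\end{equation}
which is presumably already established in the proof of Theorem \ref{thm:asymp} and from which \eqref{eq:closed_wj} follows. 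Then I would substitute $j = k-d$ and send $k\to\infty$: the coefficient $e^{-\alpha j}=e^{-\alpha(k-d)}\to 0$ while $1-e^{-\alpha(j-1)}\to 1$, and the recurrence in the variable $d$ becomes, after carefully tracking how the index shift $j\mapsto j-1$ interacts with $k\mapsto k-1$ (so $d$ is preserved under the second term but shifted under the first), a relation of the form
\begin{equation}
\llweight{d}{\gamma} = \llweight{d}{\gamma}\cdot 0 + \bigl(1-e^{-\alpha\,?}\bigr)\llweight{d-1}{\gamma} + (\text{correction}),
\end{equation}
so the real content is getting the correction term right. Once the limiting recurrence and its boundary behavior ($\llweight{d}{\gamma}\to 0$ as $d\to-\infty$, $\to 1$ as $d\to+\infty$) are pinned down, part (a) is proved by verifying that the alternating series \eqref{eq:closed_wd} satisfies it — this is a direct computation using the geometric-type structure of the summand $e^{-\alpha i(d+i/2+1/2)}$, where shifting $d$ by one multiplies the $i$-th term by $e^{-\alpha i}$.

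For part (b), the symmetry $\llweight{d}{\gamma}=1-\llweight{-(d+1)}{\gamma}$, I would argue that both sides satisfy the same recurrence and boundary conditions, so uniqueness forces them equal; alternatively, one can manipulate the series \eqref{eq:closed_wd} directly, completing the square in the exponent $i(d+i/2+1/2) = \tfrac12\bigl((i+d+1/2)^2 - (d+1/2)^2\bigr)$, which makes the $d\leftrightarrow -(d+1)$ symmetry of the quadratic form manifest, and then reindexing the alternating sum. Part (c), the two-sided logistic bounds, then follows from part (b) combined with the monotonicity in part (d): it suffices to prove the upper bound for $d\geq 0$ (the other three cases follow by symmetry), and this I would get by bounding the alternating series — grouping consecutive terms $e^{-\alpha i(d+i/2+1/2)} - e^{-\alpha(i+1)(d+i/2+1)}$, each of which is nonnegative for $d\geq 0$, against the corresponding terms of the geometric series $\sum_i(-1)^i e^{-\alpha i(d+1)} = 1/(1+e^{-\alpha(d+1)})$.

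For part (d), monotonicity and the sigmoidal shape, the cleanest route is to differentiate the limiting recurrence or, better, to use the representation $\llweight{d}{\gamma} = \sum_{i\geq 0}(-1)^i e^{-\alpha i(d+i/2+1/2)}$ and compute the first difference $\llweight{d+1}{\gamma}-\llweight{d}{\gamma} = \sum_{i\geq 1}(-1)^{i+1}e^{-\alpha i(d+i/2+1/2)}(1-e^{-\alpha i})$, showing it is positive (via the same pairing argument), and the second difference similarly, tracking where the sign of the discrete second derivative flips at $d = 0$ — this is where the symmetry from part (b) does the bookkeeping, since the inflection must occur at the center of symmetry. \textbf{The main obstacle} I anticipate is the very first step: carefully justifying the interchange of the $k\to\infty$ limit with the (already-taken) scaling limit, and correctly deriving the limiting recurrence for $\llweight{d}{\gamma}$, since the index bookkeeping when reparametrizing a non-homogeneous three-index recurrence down to one index is delicate and an off-by-one error there propagates into every subsequent part. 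Establishing existence of the double limit rigorously — rather than just formally manipulating series — likely requires a monotonicity or Cauchy argument showing $\weight{k-d}{k,m,p}$ is monotone in $k$ for fixed $d$, which I would try to extract from the majorization hierarchy already mentioned in the discussion following Theorem \ref{thm:increasingm}.
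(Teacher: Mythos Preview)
Your plan has two genuine gaps.

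\textbf{First gap: the limiting recurrence is degenerate.} Your asymptotic for the inhomogeneous term is wrong: $\binom{p-j}{m-1}/\binom{p}{m}\to(e^{\alpha}-1)e^{-\alpha j}$, not $0$, so the $(j,k)$-recurrence (Lemma~\ref{lm:infinite_p}) is \emph{non}-homogeneous. But the deeper problem is that even with the correct recurrence, substituting $j=k-d$ and sending $k\to\infty$ yields only the tautology $\llweight{d}{\gamma}=\llweight{d}{\gamma}$: both the inhomogeneous term $(e^{\alpha}-1)e^{-\alpha(k-d)}$ and the coefficient $e^{-\alpha(k-d)}$ vanish, while $\limitweight{j-1}{k-1,\gamma}=\limitweight{(k-1)-d}{k-1,\gamma}\to\llweight{d}{\gamma}$ with coefficient $1$. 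There is no nontrivial one-dimensional recurrence to extract this way, so your ``correction term'' cannot be recovered by this route. The paper sidesteps this entirely: it first decouples the $(j,k)$-recurrence into a pair of \emph{one}-dimensional recurrences (\eqref{eq:r1} in $k$ alone, \eqref{eq:r2} in $j$ alone), solves \eqref{eq:r1} explicitly to get a finite alternating sum for $\limitweight{j}{k,\gamma}$ (Lemma~\ref{lm:closed_alt}), and then passes to the limit directly in that closed form, controlling the tail. Existence of the limit comes from monotonicity of $k\mapsto\limitweight{k-d}{k,\gamma}$, which is obtained by combining \eqref{eq:r1} and \eqref{eq:r2}, not from the majorization hierarchy in $m$.

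\textbf{Second gap: pairing is not enough for the sigmoidal shape and the sharp bounds.} Your proposal for (d) is to pair consecutive terms in the alternating series for the second difference. This works for the first difference (monotonicity), but for the second difference the summand has a \emph{quadratic} exponent $i(d+i/2+1/2)$, and naive pairing does not give a sign. The paper instead recognizes $\llweight{d}{\gamma}$ as a partial theta function and invokes a $q$-series identity of Coogan,
\[
\sum_{i\ge 0}(-1)^i z^{2i}q^{i^2}=\frac{1}{1+z}\sum_{i\ge 0}z^i\prod_{j=0}^{i-1}\frac{1-zq^j}{1+zq^{j+1}},
\]
differentiates it in $q$, and reads off the sign of the second derivative termwise. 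The same identity is what yields the refined lower bound $\llweight{d}{\gamma}\ge 1/(1+e^{-\alpha(d+1/2)})$ in part~(c); your geometric-comparison idea only recovers the weaker bounds $1/(1+e^{-\alpha d})$ and $1/(1+e^{-\alpha(d+1)})$, which are just the limits of \eqref{eq:bounds_wj}. Part~(b), by contrast, does go through by the completing-the-square manipulation you describe.
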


\begin{proof}[Proof sketch]
We establish each part in turn.

\emph{Part (a).}
Using the recurrences from Theorem~\ref{thm:asymp}, one can show $\limitweight{j+1}{k,\gamma} - \limitweight{j}{k-1,\gamma} \geq 0$. For fixed offset $d = k - j$, the sequence $\{\limitweight{k-d}{k,\gamma}\}_{k \geq d+1}$ is therefore bounded and non-decreasing, hence convergent by the monotone convergence theorem. Taking $k \to \infty$ in the closed form \eqref{eq:closed_wj} with $j = k - d$, careful analysis of convergence rates (tail terms decay exponentially) yields \eqref{eq:closed_wd}.

\emph{Part (b).}
Direct calculation using \eqref{eq:closed_wd} verifies $1 - \llweight{-(d+1)}{\gamma} = \llweight{d}{\gamma}$.

\emph{Part (c).}
These follow from taking $k \to \infty$ in the bounds \eqref{eq:bounds_wj}.

\emph{Part (d).}
The bounds immediately imply that $\llweight{d}{\gamma}$ is non-decreasing in $d$. Convexity for $d < -1/2$ and concavity for $d > -1/2$ are established by analyzing the second derivative of an extended continuous version of $\llweight{d}{\gamma}$, using a $q$-series identity from \citet{coogan2003q}.
\end{proof}

\section{Main results}

We now present our main results comparing EFS($k,m$) to FS($k$) in terms of the training error and degrees of freedom decomposition \eqref{eq:expected_error}.

\subsection{Expected training error}

Let $\hat m$ denote the value of $m$ minimizing the training error $ \|\mathbf{y}-\hat{\mathbf{f}}_{k,m,p}\|^2$.
Since FS($k$) coincides with EFS($k,m$) when $m=p$, the expected training error of EFS($k,\hat m$) is trivially no larger than that of FS($k$).
What is surprising is that $\hat m$ can take values less than $m$ and that we can \emph{quantify} the gap in expected training error using the asymptotic formulas \eqref{eq:bounds_wj}.

\begin{theorem}[Training error improvement]
\label{thm:training}
Grant Assumption \ref{ass:orthogonal} and fix $k \geq 1$. Assume $ p $ is large enough that
\[
p \geq C\Bigg(\frac{\sum_{j=1}^k \mathbb{E}\big[\hat\beta^2_{j,p}\big]}{\sum_{j=k+1}^{2k}\mathbb{E}\big[\hat\beta^2_{j,p}\big]}\Bigg)^3,
\]
where $C$ is a positive constant depending only on $k$.
Then
\begin{equation}
\label{eq:error_gap}
 \mathbb{E}\big[\|\mathbf{y}-\hat{\mathbf{f}}_{k,\hat m,p}\|^2\big] \leq \mathbb{E}\big[\|\mathbf{y}-\hat{\mathbf{f}}_{k,p}\|^2\big]  -\frac{1}{4}\frac{\big(\frac{1}{k}\sum_{j=k+1}^{2k}\mathbb{E}\big[\hat\beta^2_{j,p}\big]\big)^2}{\frac{1}{k}\sum_{j=1}^k \mathbb{E}[\hat\beta^2_{j,p}]}.
\end{equation}
\end{theorem}

\begin{proof}[Proof sketch]
The argument proceeds in four steps.

\emph{Step 1.} We decompose the training error gap.
From \eqref{eq:decomp_ensemble} and \eqref{eq:decomp_fs}, the gap decomposes as
\[
\|\mathbf{y} - \hat{\mathbf{f}}_{k,p}\|^2 - \|\mathbf{y} - \hat{\mathbf{f}}_{k,m,p}\|^2 = \underbrace{\sum_{j=k+1}^p \hat\beta_{j,p}^2\big(1 - (1-\weight{j}{k,m,p})^2\big)}_{\text{gain from including } j > k} - \underbrace{\sum_{j=1}^k \hat\beta_{j,p}^2(1 - \weight{j}{k,m,p})^2}_{\text{loss from excluding } j \leq k}.
\]

\emph{Step 2.} We apply Chebyshev's sum inequality.
Both $\mathbb{E}[\hat\beta_{j,p}^2]$ and $\weight{j}{k,m,p}$ are non-increasing in $j$. Chebyshev's inequality therefore bounds the loss term from above and the gain term from below by products of averages,
\begin{align*}
\sum_{j=1}^k \mathbb{E}[\hat\beta_{j,p}^2](1 - \weight{j}{k,m,p})^2 &\leq \frac{1}{k}\sum_{j=1}^k \mathbb{E}[\hat\beta_{j,p}^2] \cdot \sum_{j=1}^k (1 - \weight{j}{k,m,p})^2, \\
\sum_{j=k+1}^{2k} \mathbb{E}[\hat\beta_{j,p}^2]\big(1 - (1-\weight{j}{k,m,p})^2\big) &\geq \frac{1}{k}\sum_{j=k+1}^{2k} \mathbb{E}[\hat\beta_{j,p}^2] \cdot \sum_{j=k+1}^{2k} \big(1 - (1-\weight{j}{k,m,p})^2\big).
\end{align*}

\emph{Step 3.} We bound the weight sums using the logistic approximation.
The bounds \eqref{eq:bounds_wj} imply $\weight{j}{k,\gamma} \leq 1/2$ for $j \geq k+1$, so $1 - (1-\weight{j}{k,\gamma})^2 \geq \frac{3}{2}\weight{j}{k,\gamma}$. Combined with the lower bound on $\weight{j}{k,\gamma}$, this gives $\sum_{j=k+1}^{2k}(1-(1-\weight{j}{k,\gamma})^2) \gtrsim 1 - \gamma$. A similar analysis yields $\sum_{j=1}^k(1-\weight{j}{k,\gamma})^2 \lesssim (1-\gamma)^2$.

\emph{Step 4.} We optimize over $\gamma$.
Let $A = \frac{1}{k}\sum_{j=1}^k \mathbb{E}[\hat\beta_{j,p}^2]$ and $B = \frac{1}{k}\sum_{j=k+1}^{2k} \mathbb{E}[\hat\beta_{j,p}^2]$. The gap is at least $c_1 B(1-\gamma) - c_2 A(1-\gamma)^2$ for universal constants $c_1, c_2 > 0$. Optimizing over $\gamma$ and verifying the condition on $p$ ensures the asymptotic approximations are valid, yields the bound $\frac{1}{4}B^2/A$.
\end{proof}

The bound \eqref{eq:error_gap} is larger when the coefficients $\hat\beta_{k+1,p},\hat\beta_{k+2,p},\ldots$ decay slowly. This makes intuitive sense, as in such settings spreading the weight budget more evenly across features improves the fit.

\subsection{Degrees of freedom}
\label{subsec:dof_main}

We now show that, under a natural assumption, decreasing the feature subsampling ratio $m/p$ also decreases the degrees of freedom.
The key is a decomposition of the degrees of freedom of EFS($k,m$) as a weighted sum over those of FS($j$) for varying $j$.

\begin{theorem}[Degrees of freedom improvement]
\label{thm:increasingm}
Grant Assumption \ref{ass:orthogonal}. For all $ k,m, p \geq 1$, the degrees of freedom satisfy
    \begin{equation} \label{eq:df_alt_decomp}
        \df(\hat f_{k,m,p}) = \sum_{j=1}^p \weight{j}{k,m,p}\paren*{\df(\hat f_{j,p}) - \df(\hat f_{j-1,p})},
    \end{equation}
where $\hat f_{0,p} = 0$ and $\df(\hat f_{0,p}) = 0$.
In particular, if $k \mapsto \df(\hat f_{k,p})$ is concave, then $m \mapsto \df(\hat f_{k,m,p})$ is increasing.
\end{theorem}

\begin{proof}[Proof sketch]
The decomposition \eqref{eq:decomp_ensemble} implies
   \[
       \hat f_{k,m,p} = \sum_{j=1}^p \weight{j}{k,m,p}(\hat f_{j,p} - \hat f_{j-1,p}).
   \]
Since degrees of freedom is linear in the estimator, \eqref{eq:df_alt_decomp} follows immediately.

For the second statement, we establish that the weight sequences form a \emph{majorization hierarchy}. For $m < m'$,
    \[
    \sum_{j=1}^l \weight{j}{k,m,p} \leq \sum_{j=1}^l \weight{j}{k,m',p} \quad \text{for all } l = 1, \dots, p,
    \]
with equality when $l = p$. The conclusion then follows from a standard majorization inequality \citep{marshall11}.

To prove the majorization, we construct a coupling between the EFS processes at levels $m$ and $m'$. At each iteration, first draw a subset $B_m$ of size $m$, then augment it to $B_{m'} \supseteq B_m$ of size $m'$ by adding $m'-m$ additional elements. Let $Z_{k,m,p}(l) = |S_{k,m,p} \cap [l]|$ count selected features with index at most $l$. Note that $\mathbb{E}[Z_{k,m,p}(l)] = \sum_{j=1}^l \weight{j}{k,m,p}$.

An inductive argument shows $Z_{k,m',p}(l) \geq Z_{k,m,p}(l)$ for all $l$ and $k$. The key insight is that $\min B_{m'} \leq \min B_m$, so the feature selected with parameter $m'$ has index at most that selected with $m$. Careful case analysis completes the induction.
\end{proof}

The concavity condition in Theorem \ref{thm:increasingm} is likely to hold in low-SNR scenarios where $ |\hat\beta_{k,p}| \gg |\beta_{k,p}|$.
Indeed, in such cases,
\begin{equation}
\label{eq:low_snr_concavity}
        \frac{\sigma^2}{n}\paren*{\df(\hat f_{k,p}) - \df(\hat f_{k-1,p})} = \frac{\sigma^2}{n}\df\big(\hat\beta_{k,p}x_{k,p}\big) 
         = \E\big[\hat\beta_{k,p}\big(\hat\beta_{k,p} - \beta_{k,p}\big)\big] 
         \approx \E\big[\hat\beta_{k,p}^2\big],
\end{equation}
which is non-increasing in $k$ by Definition \ref{def:betaj}.
Empirical support for the concavity of FS($k$) degrees of freedom, even in non-orthogonal settings, appears in Figures 1 and 4 of \citet{hastie2020extended} and Figure 7 of \citet{mentch2020randomization}.

\begin{remark}
    For a constant model ($\bbeta = \mathbf{0}$), the approximation in \eqref{eq:low_snr_concavity} becomes exact.
\end{remark}

When concavity fails, the degrees of freedom can be non-monotonic in $m$.
This happens, for instance, when applying FS($k$) and EFS($k,m$) to data generated from an exactly $k$-sparse model.

\begin{proposition}[Non-monotonic degrees of freedom]
\label{prop:non_monotonic_df}
   Grant Assumption \ref{ass:orthogonal} and suppose $\bbeta$ is exactly $k$-sparse with all nonzero coefficients equal to 1, and that the noise is Gaussian.
   Then
   \[
    \lim\limits_{\substack{m,\,p,\,n\to\infty,\\m/p\to \gamma}} \frac{\df(\hat f_{k,m,p}) - \df(\hat f_{k,p})}{\log p} \geq C,
   \]
   where $C > 0$ depends only on $\gamma$ and~$k$.
\end{proposition}

\begin{proof}[Proof sketch]
    Because of \eqref{eq:df_alt_decomp}, we know that $\df(\hat f_{k,m,p}) \geq \weight{j}{k,m,p}\df(\hat \beta_{k+1,p} x_{k+1,p})$.
    It hence suffices to show that $\df(\hat \beta_{k+1,p} x_{k+1,p}) \approx \E[\hat \beta_{k+1,p}^2]$ is of order $\log p$.
    This follows from the fact that $\hat \beta_{k+1,p} x_{k+1,p}$ is essentially the maximum of $p-k$ independent $\chi_1^2$ random variables.
\end{proof}

\section{Further comparisons}

\subsection{Comparison with elastic net}\label{sec:comp_elastic_net}

It is instructive to compare EFS($k,m$) with elastic net regression \citep{zou2005regularization}, which combines the $\ell_1$ and $\ell_2$ penalties of lasso and ridge,
\[
\hat{\boldsymbol{\beta}}^{\text{EN}} = \argmin_{\boldsymbol{\beta}\in\mathbb{R}^p} \Big\{\|\mathbf{y} - \mathbf{X}\boldsymbol{\beta}\|^2 + \lambda_1\|\boldsymbol{\beta}\|_{\ell_1} + \lambda_2\|\boldsymbol{\beta}\|^2_{\ell_2} \Big\}.
\]
Under Assumption \ref{ass:orthogonal}, elastic net takes a simple form \citep{zou2005regularization},
\begin{equation}
\label{eq:elastic_net}
\hat{\beta}^{\text{EN}}_{j,p} = \frac{(|\hat \beta_{j,p}| - \lambda_1/2)_{+}}{1 + \lambda_2}\operatorname{sgn}(\hat \beta_{j,p}).
\end{equation}
For comparison, Theorem \ref{thm:wd} implies the EFS($k,m$) coefficient is approximately
\[
\hat{\beta}^{\text{EFS}}_{j,p} \approx \frac{\hat \beta_{j,p}}{1+\big(1-\frac{m}{p}\big)^{k-j+1/2}}.
\]

Elastic net was originally designed to mitigate lasso's tendency toward highly sparse solutions, which causes instability when features are correlated.
From \eqref{eq:elastic_net}, we see that even in the orthogonal setting, elastic net spreads the parameter budget across more coefficients by shrinking all nonzero coefficients by the constant factor $1/(1+\lambda_2)$.
This is similar in spirit to feature subsampling in EFS($k,m$), but the families of attainable solutions differ, yielding different predictive performances on any given dataset.

\subsection{Revisiting best subset vs.\ lasso}
\citet{hastie2020extended} compared BSS and lasso, showing that BSS (and FS($k$)) achieves higher accuracy than lasso in high-SNR settings, and that lasso dominates in low-SNR regimes.
They explain this by observing that ``different procedures bring us from the high-bias to the high-variance ends of the tradeoff along different model paths,'' with neither algorithm's solution path completely dominating the other. See also \citet{ghosh2025signal} for recent theoretical support.

Our results add to this discussion by showing that the BSS solution path is not Pareto optimal. EFS can simultaneously reduce both bias and variance.
The ensemble weights trace yet another solution path, distinct from those of lasso and elastic net.

\subsection{Shrinkage}

Decreasing $m$ does not shrink every coefficient (some weights increase and others decrease), but it does shrink the fitted values toward zero.

\begin{proposition}\label{thm:shrinkage}
Grant Assumption \ref{ass:orthogonal}. For $m < m'$ and any $k,p \geq 1$,
\[
\|\hat{\mathbf{f}}_{k,m,p}\| \leq \|\hat{\mathbf{f}}_{k,m',p}\|.
\]
\end{proposition}

\begin{proof}[Proof sketch]
From \eqref{eq:decomp_ensemble}, it suffices to show 
$\sum_{j=1}^p \hat\beta_{j,p}^2 (\weight{j}{k,m,p})^2 \leq \sum_{j=1}^p \hat\beta_{j,p}^2 (\weight{j}{k,m',p})^2$.
This follows from the majorization established in Theorem~\ref{thm:increasingm} together with the Schur-convexity of $g(w_1, \ldots, w_p) = \sum_j \beta_j^2 w_j^2$.
\end{proof}

The extremes are as follows. Maximal shrinkage occurs at $m=1$, where $ \|\hat{\mathbf{f}}_{k,1,p}\| = (k/p)\|\hat{\boldsymbol{\beta}}\|_{\ell_2}$. No shrinkage occurs at $m=p$, where EFS($k,m$) reduces to FS($k$).

\section{Another mechanism for training error reduction}

Theorem \ref{thm:training} establishes training error reduction via the ``enlarged model class'' mechanism of \citet{curth2024random}. The orthogonality assumption makes FS($k$) equivalent to BSS, so randomization helps only by searching over a richer family of coefficient profiles.
However, \citet{liu2025randomization} identified a second mechanism. Feature subsampling can help \emph{individual base estimators} escape local optima that greedy optimization would otherwise get trapped in.

We now construct an example demonstrating this second mechanism.
In the orthogonal setting, FS($k$) always selects the $k$ features most correlated with the response, which means that individual EFS base estimators cannot outperform it.
We therefore introduce correlation in the form of a ``spurious'' feature $x_{p,p}$ that is strongly correlated with the response but not part of the optimal representation.
Greedy FS selects this spurious feature, but EFS base estimators may avoid it if $x_{p,p}$ is excluded from the first candidate set.

\begin{proposition}[Escaping local minima]
    \label{prop:escape_local_min}
    Consider a design matrix $\bX$ with $\langle \bx_{\cdot i,p}, \bx_{\cdot j,p}\rangle = \delta_{ij}$ for $1 \leq i, j \leq {p-1}$, $\langle \bx_{\cdot i, p}, \bx_{\cdot p, p} \rangle = 3^{-1/2}$ for $i \in \{1,2\}$, and $\langle \bx_{\cdot i, p}, \bx_{\cdot p, p} \rangle = 0$ otherwise.
    Assume the noiseless model $y = \beta(x_{1,p} + x_{2,p}) + \zeta \sum_{i=3}^{p-1}x_{i,p}$ with $6^{-1/2}\beta > \zeta > 0$.
    Let $\check f_{k,m,p}$ denote a single EFS base estimator. 
    There exists $C > 0$ depending only on $\beta, \xi$ such that for $k, p \geq C$,
    \[
        \E\big[\|\by - \hat \bbf_{k,p} \|^2\big] > \E\big[\|\by - \check \bbf_{k,m,p} \|^2\big].
    \]
\end{proposition}

\section{Simulations}

To validate Theorems \ref{thm:training} and \ref{thm:increasingm} and assess their generality beyond orthogonal designs, we conducted a simulation study.
We generated IID training data $\{(\bx_i, y_i)\}_{i=1}^n$ with $n = 1000$, $p = 100$, where $\bx_i \sim \mathcal{N}(\boldsymbol{0}, \bSigma)$ with banded correlation $\Sigma_{ij} = \rho^{|i-j|}$ for $\rho = 0.5$, $y_i = \bbeta^\top \bx_i + \varepsilon_i$ with $\varepsilon_i \sim \mathcal{N}(0, \sigma^2)$, and exact sparsity $\beta_j = \mathbf{1}(j \leq 10)$. The noise variance $\sigma^2$ was chosen to achieve the desired signal-to-noise ratio $\textnormal{SNR} = \bbeta^{\top}\bSigma\bbeta / \sigma^2$.

For EFS, we used $B = 500$ base learners and varied $m$ over a geometric grid from $2$ to $p$, selecting the optimal $m$ via 10-fold cross-validation for each $k \in [20]$.
We compared degrees of freedom and training error for EFS($k,m$) against FS($k$) at each model size $k$.

Figure \ref{fig:df_comp} shows results for $\textnormal{SNR} \in \{0.053, 0.25, 1.0\}$.
At low SNR ($0.053$ and $0.25$), EFS($k,m$) reduces both degrees of freedom and training error relative to FS($k$) at each $k$, shifting the entire bias-variance tradeoff curve toward the origin.
At high SNR ($1.0$), EFS($k,m$) performs worse near the true sparsity level $k=10$, consistent with Proposition \ref{prop:non_monotonic_df}.

\begin{figure}[ht]
\centering
    \begin{subfigure}[b]{0.48\linewidth}
       \includegraphics[width=\linewidth]{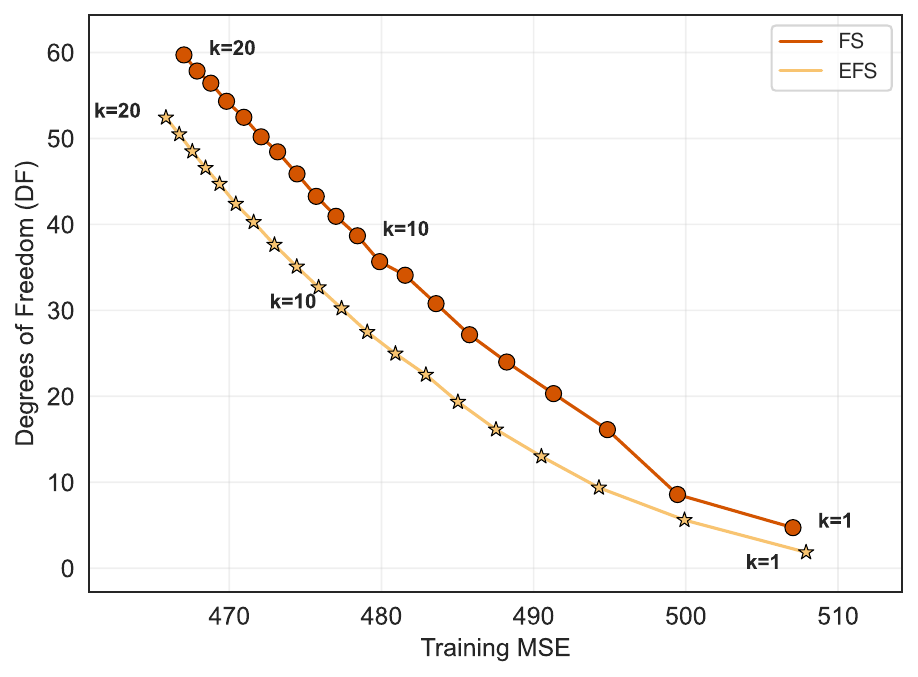}
        \caption{SNR $=0.053$}
    \end{subfigure}
    \hfill
    \begin{subfigure}[b]{0.48\linewidth}
        \includegraphics[width=\linewidth]{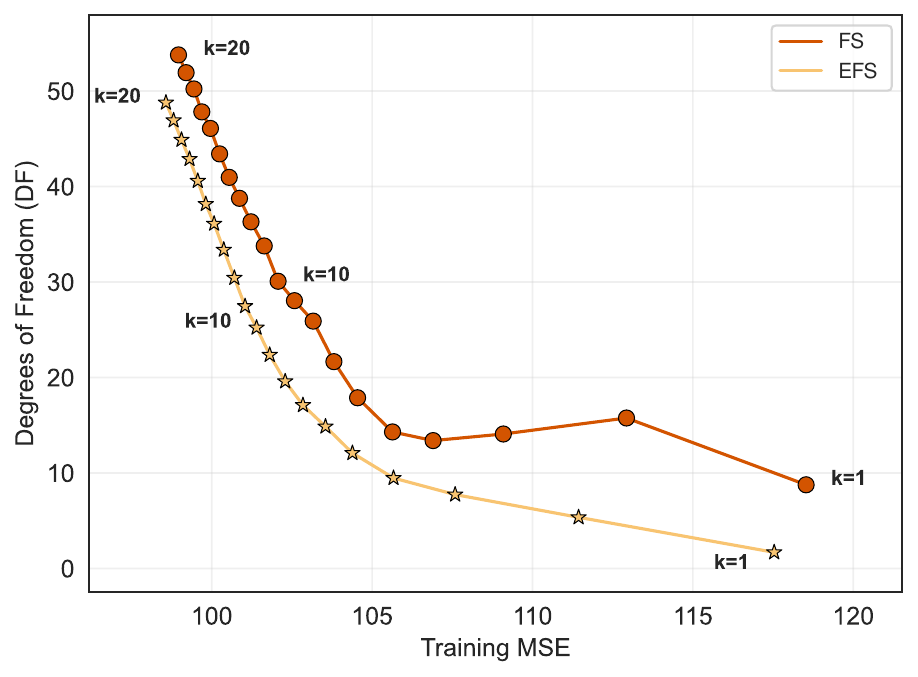}
        \caption{SNR $=0.25$}
    \end{subfigure}
    \hfill
    \begin{subfigure}[b]{0.48\linewidth}
        \includegraphics[width=\linewidth]{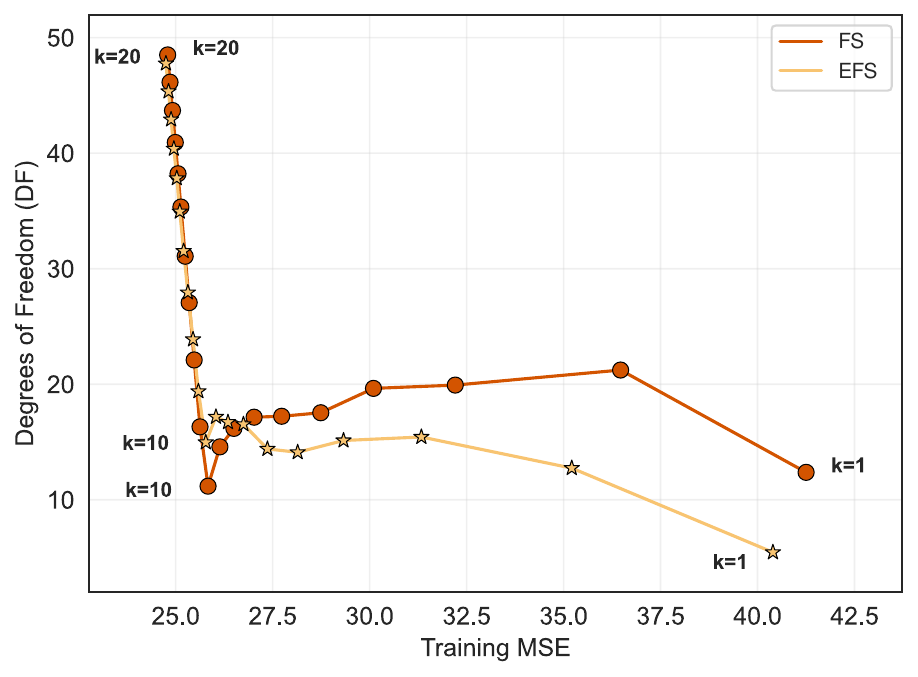}
        \caption{SNR $=1.0$}
    \end{subfigure}
     \caption{Degrees of freedom vs.\ training MSE for EFS($k,m$) and FS($k$) with $p/n = 0.1$ and banded exact sparsity, as $k$ varies over $\{1,2,\ldots,20\}$ (true sparsity is 10).
     At each $k$, EFS($k,m$) reduces both quantities, shifting the tradeoff curve toward the origin.
     Gains are more pronounced at low SNR. At high SNR, EFS($k,m$) and FS($k$) become similar.}
    \label{fig:df_comp}
\end{figure}

\section{Discussion}
\label{sec:conclusion}

We have revisited the role of feature subsampling in randomized ensembles such as random forests, focusing on subtleties that emerge when subsampling is interleaved with greedy optimization rather than applied to base learners fit via convex optimization.
By analyzing ensemble forward selection under orthogonal fixed design, we moved beyond the empirical observations of \citet{curth2024random,liu2025randomization,revelas2025random} to prove that EFS($k,m$) can simultaneously achieve smaller bias and smaller variance than the base estimator FS($k$).
Our work is similar in spirit to \citet{mei2024exogenous}, who analyzed an oracle version of RFs, but differs in our ability to analyze a non-oracle algorithm, explicitly quantify the performance gap, and characterize the resulting solution. In particular, we reveal a logistic reweighting of the OLS coefficients.

These results, alongside recent related work, broaden our understanding of randomization in RFs.
\citet{mentch2020randomization} concluded that feature subsampling is mainly useful in low-SNR regimes, but our results show it can also help in the absence of noise (see Proposition \ref{prop:escape_local_min}).
Moreover, since feature subsampling is often viewed as implicit regularization, our findings urge closer attention to the nuances of this phenomenon.
In particular, mental models derived from ridge and lasso regression may not transfer cleanly to algorithmic regularization via feature subsampling, dropout, or gradient descent~\citep{wager2013dropout,neyshabur2015searchrealinductivebias,arora2019implicit,bartlett2020benign}.

The main theoretical limitation of our work is the orthogonality assumption.
Our simulations partially bridge this gap, showing that EFS continues to improve upon FS in non-orthogonal settings.
Extending the theory to handle correlated features is a promising direction for future work. For instance, one could study how structured perturbations of $\bX^\top \bX/n$ affect the ensemble weights.
Additionally, our analysis applies directly only to forward selection. Extending the theoretical framework of \citet{mei2024exogenous} to characterize how feature subsampling improves actual RFs remains an important open problem.
\section*{Reproducibility}
The code used to reproduce all numerical experiments in this paper is available at \\
\href{https://github.com/yuyumi/EFS}{\faGithub\,github.com/yuyumi/EFS}.
\begin{acknowledgements}
J.M.K. was partially supported from the National Science Foundation (NSF) through CAREER DMS-2239448. Y.S.T. was supported from NUS Startup Grant A-8000448-00-00 and MOE AcRF Tier 1 Grant A-8002498-00-00.
\end{acknowledgements}
\bibliography{refs}
\newpage
\setcounter{page}{1}
\renewcommand{\thepage}{\arabic{page}}
\setcounter{section}{0}
\setcounter{equation}{0}
\setcounter{theorem}{0}
\renewcommand{\thesection}{S.\arabic{section}}
\renewcommand{\theequation}{S.\arabic{equation}}
\renewcommand{\thetheorem}{S.\arabic{theorem}}
\begin{center}
\Large Supplement to ``Revisiting Randomization in Greedy Model Search''
\end{center}

\section{Proofs}

\subsection{Proofs for finite EFS($k,m$) weights}
\label{subsec:proofs_finite}

\begin{proof}[Proof of Lemma \ref{lem:weights_monotonicity}]
    To prove monotonicty of the weights with respect to $j$, recall that $\weight{j}{k,m,p} = \mathbb{P}( x_{j, p} \in \mathcal{M}_k )$.
Here, the randomness is over the choices of the sets, $\boldsymbol{{\allowedset}} = \lbrace \allowedset_1, \allowedset_2,\ldots,\allowedset_k \rbrace$.
Define the events
\begin{equation*}
    \mathcal{E}_1 = \lbrace x_{j,p} \in \mathcal{M}_k,\; x_{j-1,p} \notin \mathcal{M}_k \rbrace,
\end{equation*}
and 
\begin{equation*}
    \mathcal{E}_2 = \lbrace x_{j-1,p} \in \mathcal{M}_k,\; x_{j,p} \notin \mathcal{M}_k \rbrace.
\end{equation*}
It suffices to show that $\mathbb{P}( \mathcal{E}_1) \leq \mathbb{P}( \mathcal{E}_2)$.
To show this, we will define a measure-preserving transformation $T\colon \mathcal{E}_1 \to \mathcal{E}_2$ with $T(\mathcal{E}_1) \subset \mathcal{E}_2$.
Assuming such a transformation exists, we have
\begin{equation*}
    \mathbb{P}( \mathcal{E}_2) \geq \mathbb{P}( T(\mathcal{E}_1)) = \mathbb{P}( \mathcal{E}_1).
\end{equation*}

To define $T$, consider $\boldsymbol{\allowedset} \in \mathcal{E}_1$.
For this random seed, there is some index $1 \leq i \leq k$ such that $x_{j,p}$ is selected at the $i$-th iteration.
This implies that we have $x_{j-1,p} \notin \allowedset_i$, otherwise, $x_{j-1, p}$ would have been selected in the $i$-th step instead.
Furthermore, we have $x_{j,p} \notin \allowedset_l$ for $l > i$.
Define $T(\boldsymbol{\allowedset}) = \lbrace \allowedset_1,\ldots,\allowedset_{i-1},\allowedset_i',\allowedset_{i+1}',\ldots,\allowedset_k' \rbrace$, where $\allowedset_i' = \allowedset_i\cup \lbrace x_{j-1,p} \rbrace \backslash \lbrace x_{j,p} \rbrace$, and for $l>i$, $\allowedset_l' = \allowedset_l$ if $x_{j-1,p} \notin \allowedset_l$ otherwise $\allowedset_l' = \allowedset_i\cup \lbrace x_{j,p} \rbrace \backslash \lbrace x_{j-1,p} \rbrace$. 
Denote $\boldsymbol{\allowedset}' = T(\boldsymbol{\allowedset})$ for convenience. Let $x_{j_l,p}$ and $x_{j'_l,p}$ be the features  selected at the $l$-th step with the set of subdictionaries $\boldsymbol{\allowedset}$ and $\boldsymbol{\allowedset}'$, respectively.  Note that $x_{j_l,p}=x_{j'_{l},p}$ for $l=1,\ldots,i-1$, since the algorithm only makes use of $\allowedset_1,\ldots,\allowedset_{i-1}$ to select these terms. Similarly,  $x_{j_l,p}=x_{j'_{l},p}$ for $l=i+1,\ldots,k$, because the combinatorics depend solely on the ordinal structure of the coefficients $|\hat\beta_{l,p}|$. Thus, $x_{j-1, p}$ in $\boldsymbol{\allowedset}'$ plays the same role as $x_{j,p}$  in $\boldsymbol{\allowedset}$.
For the same reason, $\mathbb{P}( \boldsymbol{\allowedset}) = \mathbb{P}( \boldsymbol{\allowedset}')$.
We also see that $\boldsymbol{\allowedset}' \in \mathcal{E}_2$, which proves that $T$ is well-defined and is measure-preserving.
\end{proof}

\begin{proof}[Proof of Lemma \ref{prop:finitep}]

Let $J$ be denote the index of the feature selected at the first step. To derive the recurrence for $\weight{j}{k,m,p}$, we condition on the first selected element $x_{i,p}$, i.e., condition on the event $J=i$. 
\begin{enumerate}
\item If $i=j$, which occurs with probability $\p(J=j)$, the element $x_{j,p}$ is selected in the first step. As a result, $x_{j,p}$ will always be included in $\mathcal{M}_k$.
\item If $i \neq j$, the remaining $k-1$ steps are equivalent to running EFS on $p-1$ features. The coefficients are reassigned as follows:
\begin{align*}
\hat\beta_{l,p-1} &\leftarrow \hat\beta_{l,p} \qquad \text{for } l < i \\
\hat\beta_{l,p-1} &\leftarrow \hat\beta_{l+1,p} \quad \text{for } l \geq i,
\end{align*}
where $l = 1, 2, \dots, p-1$.
We then consider two subcases:
\begin{enumerate}
    \item If $i > j$, which occurs with probability $\p(J>j)$, the index $j$ remains unchanged in the relative ranking. The weight contribution becomes $w_j^{k-1,m,p-1}$.
    \item If $i < j$, which occurs with probability $\p(J<j)$, the index $j$ is shifted to the $(j-1)$-th position in the new ordering. The weight contribution becomes $w_{j-1}^{k-1,m,p-1}$.
\end{enumerate}
\end{enumerate}

At the first step of EFS, we select the feature with the largest absolute inner product $|\hat\beta_{j,p}| = |\langle\mathbf{y},\bx_{\cdot j,p}\rangle|$ from an $m$-sized subset of $p$ features. Because there are no ties in the feature rankings, this is equivalent to selecting the minimum index $j$ in this subset. The probability of $x_{j,p}$ being selected is thus the probability that $j$ is the minimum index in a random $m$-sized subset of $p$ elements, which is $\binom{p-j}{m-1}/\binom{p}{m}$. Therefore, by the law of total probability,
\begin{equation}
\begin{aligned}
\label{eq:decomp_sum_int}
    \weight{j}{k,m,p}  
    &= \p(J = j) + w_j^{k-1,m,p-1}\p(J > j) +  w_{j-1}^{k-1,m,p-1}\p(J < j)\\
&= \frac{\binom{p-j}{m-1}}{\binom{p}{m}} + w_j^{k-1,m,p-1}\sum^p_{i=j+1}\frac{\binom{p-i}{m-1}}{\binom{p}{m}} + w_{j-1}^{k-1,m, p-1}\sum^{j-1}_{i=1}\frac{\binom{p-i}{m-1}}{\binom{p}{m}}.
\end{aligned}
\end{equation}

Now, by the hockey-stick identity from combinatorics, we can simplify the coefficients from \eqref{eq:decomp_sum_int}:
\begin{align*}
    \sum^p_{i=j+1}\binom{p-i}{m-1} &= \binom{p-j}{m}, \text{ and}\\
    \sum^{j-1}_{i=1}\binom{p-i}{m-1} &= \binom{p}{m} - \binom{p-j+1}{m}.
\end{align*}
Plugging in these two expressions into \eqref{eq:decomp_sum_int} gives us recurrence \eqref{eq:r_finite}. 
\end{proof}

\subsection{Proofs for asymptotic EFS($k,m$) weights}
\label{subsec:proofs_asymptotic}

To prove Theorem \ref{thm:asymp}, we give several useful lemmas. The first lemma shows that for fixed $j$ and $k$, we have the following existence and recurrence.
\begin{lemma}
\label{lm:infinite_p}
The limit $\limitweight{j}{k,\gamma} = \lim\limits_{\substack{m\to\infty,\, p \to\infty,\\ m/p \to \gamma}}\weight{j}{k,m,p}$ exists and satisfies the following recurrence for $j \geq 1$,
    \begin{equation}
    \label{eq:r_infty}
        \limitweight{j}{k,\gamma} = (e^{\alpha}-1)e^{-\alpha j} + e^{-\alpha j}\limitweight{j}{k-1, \gamma} + \big(1-e^{-\alpha(j-1)}\big)\limitweight{j-1}{k-1,\gamma}.
    \end{equation}
\end{lemma}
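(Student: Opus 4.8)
The plan is to pass to the limit in the finite-$p$ recurrence \eqref{eq:r_finite} from Lemma \ref{prop:finitep}, handling the existence of the limit and the form of the limiting recurrence together by induction on $k$. First I would establish the asymptotic behavior of the three binomial ratios appearing in \eqref{eq:r_finite} under the scaling $m, p \to \infty$ with $m/p \to \gamma$. A direct computation gives
\begin{equation}
\frac{\binom{p-j}{m}}{\binom{p}{m}} = \prod_{i=0}^{j-1}\frac{p-m-i}{p-i} \longrightarrow (1-\gamma)^j = e^{-\alpha j},
\end{equation}
using $\gamma = 1 - e^{-\alpha}$, and likewise $\binom{p-j+1}{m}/\binom{p}{m} \to e^{-\alpha(j-1)}$. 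For the first term, write $\binom{p-j}{m-1}/\binom{p}{m} = \frac{m}{p-j+1}\cdot\frac{\binom{p-j}{m}}{\binom{p-1}{m-1}}\cdot\frac{\binom{p-1}{m-1}}{\binom{p}{m}}$ — more simply, $\binom{p-j}{m-1}/\binom{p}{m} = \frac{m}{p-m+1}\cdot\frac{\binom{p-j+1}{m}}{\binom{p}{m}} - \frac{m}{p-m+1}\cdot\frac{\binom{p-j}{m}}{\binom{p}{m}}$ via the identity $\binom{p-j}{m-1} = \binom{p-j+1}{m} - \binom{p-j}{m}$ and $\binom{p}{m}\cdot\frac{m}{p-m+1} = \binom{p}{m-1}$; in the limit $\frac{m}{p-m+1} \to \frac{\gamma}{1-\gamma} = e^{\alpha}-1$, so this term converges to $(e^{\alpha}-1)(e^{-\alpha(j-1)} - e^{-\alpha j}) = (e^{\alpha}-1)e^{-\alpha j}$. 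These three limits are exactly the coefficients appearing in the claimed recurrence \eqref{eq:r_infty}.

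Next I would run the induction on $k$. The base case $k = 0$ is immediate since $\weight{j}{0,m,p} = 0$ for all $m,p$, so the limit exists and equals $0$, consistent with a degenerate version of \eqref{eq:r_infty} (recalling the boundary conventions $\limitweight{0}{k,\gamma} = 0$, which follow from $\weight{0}{k,m,p} = 0$). For the inductive step, assume $\limitweight{j}{k-1,\gamma}$ exists for all $j \geq 0$. Applying this to \eqref{eq:r_finite} with indices $j$ and $j-1$ at level $k-1$ on $p-1$ features — noting that $p-1 \to \infty$ and $(m)/(p-1) \to \gamma$ as well, so the inductive hypothesis applies verbatim — and combining with the binomial-ratio limits computed above, the right-hand side of \eqref{eq:r_finite} converges. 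Since the weights $\weight{j}{k,m,p}$ are bounded in $[0,1]$, convergence of the right-hand side forces $\lim \weight{j}{k,m,p}$ to exist and equal the limit of the right-hand side, which is precisely \eqref{eq:r_infty}. This closes the induction.

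The main technical point to be careful about is the interchange of limits: the term $\weight{j}{k-1,m,p-1}$ involves $p-1$, not $p$, in the last two slots, so I should verify that the inductive hypothesis — stated as a joint limit as $m,p\to\infty$ with $m/p\to\gamma$ — genuinely covers the sequence $\weight{j}{k-1, m, p-1}$. This is routine since replacing $p$ by $p-1$ does not change the limiting ratio $m/(p-1) \to \gamma$, but it is the step where one could slip. A secondary subtlety is making the binomial-ratio asymptotics fully rigorous (the products have $j$ factors with $j$ fixed, so uniform control is trivial, but one should state it cleanly), and checking that the boundary cases $j = 1$ (where $\limitweight{j-1}{k-1,\gamma} = \limitweight{0}{k-1,\gamma} = 0$ and the coefficient $1 - e^{-\alpha(j-1)} = 0$ anyway) are consistent. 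None of these presents a real obstacle; the argument is a clean limiting passage combined with an induction on $k$.
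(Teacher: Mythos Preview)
Your approach is the same as the paper's: induct on $k$, taking limits of the three binomial ratios in the finite-$p$ recurrence \eqref{eq:r_finite} and noting that the shift $p\mapsto p-1$ in the recursive terms does not affect $m/(p-1)\to\gamma$. One slip to clean up: in your computation of $\binom{p-j}{m-1}/\binom{p}{m}$ via Pascal's identity $\binom{p-j}{m-1}=\binom{p-j+1}{m}-\binom{p-j}{m}$, the extra factor $\tfrac{m}{p-m+1}$ should not appear (Pascal gives the difference directly), and your subsequent equality $(e^{\alpha}-1)(e^{-\alpha(j-1)}-e^{-\alpha j})=(e^{\alpha}-1)e^{-\alpha j}$ is false---the left side is $(e^{\alpha}-1)^2 e^{-\alpha j}$. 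Drop the spurious factor and you get the correct limit $e^{-\alpha(j-1)}-e^{-\alpha j}=(e^{\alpha}-1)e^{-\alpha j}$ immediately.
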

However, recurrence \eqref{eq:r_infty} contains dependencies on both $\limitweight{j}{k-1,\gamma}$ and $\limitweight{j-1}{k-1,\gamma}$. These dependencies make it difficult to construct a closed form and uncover properties of $\limitweight{j}{k,\gamma}$. Ideally, we would like recurrences involving a change in $j$ to be separate from recurrences involving a change in $k$. Fortunately, recurrence \eqref{eq:r_infty} can be separated into the following pair of recurrences.
\begin{lemma}
\label{lm:recursion}
From the the recurrence in Lemma \ref{lm:infinite_p}, we can construct the following recurrence pair for $j \geq 1$:
\begin{equation}
\label{eq:r1}
\limitweight{j}{k,\gamma}  = \big(e^{-\alpha(j-k)}-e^{-\alpha j}\big)\big(1-\limitweight{j}{k-1,\gamma}\big), \quad \limitweight{j}{0,\gamma} = 0,
\end{equation}
\begin{equation}
\label{eq:r2}
     \limitweight{j}{k,\gamma}  = 1-e^{-\alpha k} - \big(e^{-\alpha(k-j+1)}-e^{-\alpha k}\big)\limitweight{j-1}{k,\gamma}, \quad \limitweight{0}{k,\gamma} = 0.
\end{equation}
\end{lemma}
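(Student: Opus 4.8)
The plan is to derive the two decoupled recurrences \eqref{eq:r1}--\eqref{eq:r2} from the single three‑term recurrence \eqref{eq:r_infty} of Lemma~\ref{lm:infinite_p} by a simultaneous induction on $k$, using the boundary conditions $\limitweight{j}{0,\gamma}=0$ and $\limitweight{0}{k,\gamma}=0$ (inherited from Lemma~\ref{prop:finitep} by passing to the limit) as the base. The only algebra involved is the pair of identities $e^{-\alpha a}e^{-\alpha b}=e^{-\alpha(a+b)}$ and $(e^{\alpha}-1)e^{-\alpha j}=e^{-\alpha(j-1)}-e^{-\alpha j}$; substituting the second into \eqref{eq:r_infty} puts it in the symmetric ``survival'' form $1-\limitweight{j}{k,\gamma}=(1-e^{-\alpha(j-1)})(1-\limitweight{j-1}{k-1,\gamma})+e^{-\alpha j}(1-\limitweight{j}{k-1,\gamma})$, which is convenient but not strictly needed. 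The base case $k=1$ is immediate: feeding $\limitweight{j}{0,\gamma}=\limitweight{0}{1,\gamma}=0$ into \eqref{eq:r_infty} gives $\limitweight{j}{1,\gamma}=(e^{\alpha}-1)e^{-\alpha j}$, and one checks by inspection that both \eqref{eq:r1} and \eqref{eq:r2} evaluate to this at $k=1$.

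The crux is to fix the order in which the statements are established, since at a single level $k$ the dependencies are circular: within level $k$, \eqref{eq:r1} must come before \eqref{eq:r2}, and each uses statements from level $k-1$. Concretely, for the inductive step I would assume \eqref{eq:r1} and \eqref{eq:r2} hold at level $k-1$ for all $j\geq 1$, and then: (a) for each $j\geq 1$, form the difference between the right‑hand side of \eqref{eq:r_infty} at $(j,k)$ and the right‑hand side of \eqref{eq:r1} at $(j,k)$, substitute \eqref{eq:r2} at $(j,k-1)$ for $\limitweight{j}{k-1,\gamma}$, and verify that the $\limitweight{j-1}{k-1,\gamma}$ terms cancel and the remaining exponential constants sum to zero; (b) with \eqref{eq:r1} now available at level $k$, for each $j\geq 1$ form the difference between \eqref{eq:r_infty} at $(j,k)$ and \eqref{eq:r2} at $(j,k)$, substitute \eqref{eq:r1} at $(j-1,k)$ for $\limitweight{j-1}{k,\gamma}$ and \eqref{eq:r2} at $(j,k-1)$ for $\limitweight{j}{k-1,\gamma}$, and check that both the coefficient of $\limitweight{j-1}{k-1,\gamma}$ and the constant term vanish. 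The one nonobvious product identity needed in (b) is $(e^{-\alpha(k-j+1)}-e^{-\alpha k})(e^{-\alpha(j-1-k)}-e^{-\alpha(j-1)})=1-e^{-\alpha k}-e^{-\alpha(j-1)}+e^{-\alpha(k+j-1)}$. The case $j=1$ needs no special handling: in \eqref{eq:r2} the coefficient of $\limitweight{0}{k,\gamma}$ is $e^{-\alpha k}-e^{-\alpha k}=0$, so the value of $\limitweight{0}{k,\gamma}$ (where \eqref{eq:r1} is not expected to hold anyway) is annihilated.

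I expect the main obstacle to be organizational rather than analytical: one must set up the induction so that \eqref{eq:r1} at level $k$ is proved from \eqref{eq:r2} at level $k-1$ only, and \eqref{eq:r2} at level $k$ from \eqref{eq:r1} at the \emph{same} level, and then keep the exponents straight across roughly a dozen cancellations in (a) and (b). I would also record the small observation that \eqref{eq:r_infty} together with the two boundary conditions already pins down the entire array $(\limitweight{j}{k,\gamma})_{j,k\geq 0}$ uniquely (it can be filled in diagonal by diagonal), so that \eqref{eq:r1}--\eqref{eq:r2} is genuinely \emph{equivalent} to \eqref{eq:r_infty} and not merely implied by it; this remark also supplies a fallback if the direct substitutions become unwieldy, namely to verify instead that any array satisfying \eqref{eq:r1}--\eqref{eq:r2} and the boundary conditions satisfies \eqref{eq:r_infty}, and then invoke uniqueness.
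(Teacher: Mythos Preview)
Your proposal is correct and follows essentially the same inductive scheme as the paper: verify the $k=1$ base case directly, then at each new level first establish \eqref{eq:r1} using \eqref{eq:r2} from the previous level, and afterward establish \eqref{eq:r2} using the freshly proved \eqref{eq:r1} at the current level. The only cosmetic difference is that in your step (b) you substitute both \eqref{eq:r1} at $(j-1,k)$ and \eqref{eq:r2} at $(j,k-1)$ to reduce everything to $\limitweight{j-1}{k-1,\gamma}$, whereas the paper inverts \eqref{eq:r1} at $(j,k)$ and $(j-1,k)$ to eliminate $\limitweight{j}{k-1,\gamma}$ and $\limitweight{j-1}{k-1,\gamma}$ directly; the algebra is equivalent.
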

\begin{proof}[Proof of Lemma \ref{lm:infinite_p}]
    To show the limit form of the coefficients in recurrence \eqref{eq:r_infty}, we derive the following result using basic asymptotic facts. For a fixed $j$, for $m,p \to \infty$ and $m/p \to \gamma$, we have
\begin{align*}
    \frac{\binom{p-j}{m-1}}{\binom{p}{m}} =  \frac{m}{p-j-m+1}\frac{\binom{p-j}{m}}{\binom{p}{m}}= \frac{m}{p-j-m+1}\prod^j_{i=1}\frac{i+p-m}{i+p} \underset{m,p\to\infty}{\longrightarrow} \frac{\gamma}{1-\gamma}(1-\gamma)^j = (e^{\alpha}-1)e^{-\alpha j}.
\end{align*}
Applying the same method to the other coefficients in the recurrence gives us the following: 
\begin{align*}
    \frac{\binom{p-j}{m}}{\binom{p}{m}} &\underset{m,p\to\infty}{\longrightarrow}e^{-\alpha j}, \text{ and} \\
    \left(1-\frac{\binom{p-j+1}{m}}{\binom{p}{m}}\right) &\underset{m,p\to\infty}{\longrightarrow} \big(1-e^{-\alpha(j-1)}\big).
\end{align*}
Now, we show that $\limitweight{j}{k,\gamma}$ is well defined via induction on $k$. For the base case, let $k=1$. Note that $w_j^{0,m,p} = 0$ for all $j$ and all $p$, so
\begin{align*}
    \limitweight{j}{0,\gamma} = \lim\limits_{\substack{m\to\infty,\, p \to\infty,\\ m/p \to \gamma}}w_j^{0,m,p} = 0.
\end{align*}
Therefore, $\limitweight{j}{0,\gamma}$ is well defined for all $j$. Now suppose that $\limitweight{j}{k,\gamma}$ is well defined for all steps up to $k$ and for all $j$. Consider step $(k+1)$.  From recurrence \eqref{eq:r_finite}, we have
\begin{align*}
     & \lim\limits_{\substack{m\to\infty,\, p \to\infty,\\ m/p \to \gamma}}w_j^{k+1,m,p} \\
    & = \lim\limits_{\substack{m\to\infty,\, p \to\infty,\\ m/p \to \gamma}}\left(\frac{\binom{p-j}{m-1}}{\binom{p}{m}} + \frac{\binom{p-j}{m}}{\binom{p}{m}}w_{j}^{k, m,p} + w_{j-1}^{k, m,p}\left(1-\frac{\binom{p-j+1}{m}}{\binom{p}{m}}\right)\right)\\
    &= (e^{\alpha}-1)e^{-\alpha j} + e^{-\alpha j}\Bigg(\lim\limits_{\substack{m\to\infty,\, p \to\infty,\\ m/p \to \gamma}}\weight{j}{k,m,p}\Bigg) 
     + \big(1-e^{-\alpha(j-1)}\big)\Bigg(\lim\limits_{\substack{m\to\infty,\, p \to\infty,\\ m/p \to \gamma}}w_{j-1}^{k,m,p}\Bigg) \\
    &= (e^{\alpha}-1)e^{-\alpha j} + e^{-\alpha j}\limitweight{j}{k,\gamma} + \big(1-e^{-\alpha(j-1)}\big)\limitweight{j-1}{k,\gamma}.
\end{align*}
Since $\limitweight{j}{k,\gamma}$ and $\limitweight{j-1}{k,\gamma}$ exist for all $j$ by inductive assumption, $w_j^{k+1,\gamma}$ exists as well for all $j$. Therefore, by induction, we know that the limit $\limitweight{j}{k,\gamma}$ exists for all $j,k$ and satisfies the recurrence in Lemma \ref{lm:infinite_p}.
\end{proof}

\begin{proof}[Proof of Lemma \ref{lm:recursion}]
    We will induct on $k$. First, we verify that both recurrences satisfy the base case of $k=1$. Note that from previous arguments,
\begin{align*}
    \limitweight{j}{1,\gamma} &= \lim\limits_{\substack{m\to\infty, \, p\to\infty, \\ m/p \to \gamma}}\frac{\binom{p-j}{m-1}}{\binom{p}{m}} = (e^{\alpha}-1)e^{-\alpha j}.
\end{align*}
For recurrence \eqref{eq:r1}, we plug in $\limitweight{j}{0,\gamma} = 0$ on the right hand side:
\begin{align*}
    \limitweight{j}{1,\gamma} &= \big(e^{-\alpha(j-1)}-e^{-\alpha j}\big)\big(1-\limitweight{j}{0,\gamma}\big) =  (e^{\alpha}-1)e^{-\alpha j}.
\end{align*}
Thus, the recurrence \eqref{eq:r1} satisfies the base case $k=1$. Now, note that $\limitweight{j-1}{1,\gamma} = (e^{\alpha}-1)e^{-\alpha(j-1)}$. Plugging in this expression into recurrence \eqref{eq:r2}, we see that
\begin{align*}
    \limitweight{j}{1,\gamma} = 1-e^{-\alpha}-\big(e^{-\alpha(1-j+1)}-e^{-\alpha}\big)\limitweight{j-1}{1,\gamma} = (e^{\alpha}-1)e^{-\alpha j}.
\end{align*}
Thus, both recurrences satisfy the base case. 

Now, for a fixed $j$, suppose that both recurrences are valid for steps up to $k$. \\
Consider step $(k+1)$. We show that $\limitweight{j}{k+1,\gamma}$ satisfies recurrence \eqref{eq:r1} in the following way. We start with the recurrence from Lemma \ref{lm:infinite_p}:
\begin{align*}
    \limitweight{j}{k+1,\gamma} &= (e^{\alpha}-1)e^{-\alpha j}+e^{-\alpha j}\limitweight{j}{k,\gamma} + \big(1-e^{-\alpha(j-1)}\big)\limitweight{j-1}{k,\gamma}.
\end{align*}
Substituting $\limitweight{j-1}{k,\gamma}$ using recurrence \eqref{eq:r2} under the inductive assumption, we obtain the following result:
\begin{align*}
    \limitweight{j}{k+1,\gamma} &= (e^{\alpha}-1)e^{-\alpha j}+e^{-\alpha j}\limitweight{j}{k,\gamma} + \big(1-e^{-\alpha(j-1)}\big) \left(\frac{1-e^{-\alpha k}-\limitweight{j}{k,\gamma}}{e^{-\alpha(k-j+1)}-e^{-\alpha k}}\right) \\
    &= \big(e^{-\alpha(j-k-1)}-e^{-\alpha j}\big)\big(1-\limitweight{j}{k,\gamma}\big).
\end{align*}
Therefore, recurrence \eqref{eq:r1} is satisfied for step $(k+1)$. Now, we prove that recurrence \eqref{eq:r2} is satisfied for step $(k+1)$ in a similar fashion. We plug in recurrence \eqref{eq:r1} into Lemma \ref{lm:infinite_p}:
\begin{align*}
    \limitweight{j}{k+1,\gamma} &= (e^{\alpha}-1)e^{-\alpha j}+e^{-\alpha j}\left(1 - \frac{\limitweight{j}{k+1,\gamma}}{e^{-\alpha(j-k-1)}-e^{-\alpha j}}\right)\\
    &\qquad +\big(1-e^{-\alpha(j-1)}\big)\left(\frac{e^{-\alpha(j-k-2)}-e^{-\alpha(j-1)}-\limitweight{j-1}{k+1,\gamma}}{e^{-\alpha(j-k-2)}-e^{-\alpha(j-1)}}\right).
\end{align*}
Rearranging terms, we obtain the following recurrence:
\begin{align*}
    \limitweight{j}{k+1,\gamma} &= 1-e^{-\alpha(k+1)}-\big(e^{-\alpha(k-j+2)}-e^{-\alpha(k+1)}\big)\limitweight{j-1}{k+1,\gamma}.
\end{align*}
Thus, recurrence \eqref{eq:r2} is satisfied for step $(k+1)$. Therefore, by induction, for a fixed $j$, both recurrences are satisfied for all $k\in \N^+$. 
\end{proof}
Next, we prove Theorem \ref{thm:asymp}. 

\begin{proof}[Proof of Theorem \ref{thm:asymp}]
To prove property (a), note that from Lemma \ref{lm:infinite_p}, we have that $\limitweight{j}{k,\gamma}$ exists. From \eqref{eq:r1}, a first-order non-homogeneous recurrence relation with feature coefficients, we directly obtain the following closed form for $\limitweight{j}{k,\gamma}$ \citep[Equation 1.2.4]{elaydi2005introduction}: 
\begin{equation*}
\label{eq:closed_naive}
    \limitweight{j}{k,\gamma} =\sum_{i=1}^{k} (-1)^{k-i}\prod_{l=i}^k\big(e^{-\alpha(j-l)}-e^{-\alpha j}\big).
\end{equation*}
To prove property (b), we first show the lower bound. According to Lemma \ref{lm:increasingk}, we have $\limitweight{j}{k,\gamma} \geq \limitweight{j}{k-1,\gamma}$.  From \eqref{eq:r1}, we know 
\begin{align*}
    \limitweight{j}{k,\gamma} &= \big(e^{-\alpha(j-k)}-e^{-\alpha j}\big)\big(1-\limitweight{j}{k-1,\gamma}\big) \\
    &\geq \big(e^{-\alpha(j-k)}-e^{-\alpha j}\big)\big(1-\limitweight{j}{k,\gamma}\big).
\end{align*}
Therefore, solving for $\limitweight{j}{k,\gamma}$, we have
\begin{align*}
    \limitweight{j}{k,\gamma} &\geq \frac{1-e^{-\alpha k}}{1-e^{-\alpha k}+e^{-\alpha(k-j)}}.
\end{align*}
Alternatively, we can rewrite recurrence \eqref{eq:r1} as
\begin{align*}
    \big(e^{-\alpha(j-k-1)}-e^{-\alpha j}\big)\big(1-\limitweight{j}{k,\gamma}\big) =  \limitweight{j}{k+1,\gamma} \geq \limitweight{j}{k,\gamma}.
\end{align*}
Solving for $\limitweight{j}{k,\gamma}$, we have
\begin{align*}
    \limitweight{j}{k,\gamma} &\leq \frac{1-e^{-\alpha k}}{1-e^{-\alpha k}+e^{-\alpha(k-j+1)}}. 
\end{align*}
Finally, note that we can write
\begin{equation}
    \begin{split}
        \frac{1-e^{-\alpha k}}{1-e^{-\alpha k}+e^{-\alpha(k-j)}} & = \frac{1}{1 + \left( 1-e^{-\alpha k}\right)^{-1}e^{-\alpha(k-j)}} \\
        & = \frac{1}{1 + \left( e^{\alpha k}-1\right)^{-1}e^{aj}} \\
        & = \frac{1}{1 + e^{-\alpha(h(a,k)-j)}}. \qedhere
    \end{split}
\end{equation}
\end{proof}
\begin{remark}
    We give a remark showing that the weights $\limitweight{j}{k,\gamma}$  satisfy the following two additional properties. The weights $\limitweight{j}{k,\gamma}$
    \begin{enumerate}[label=(\roman*)]
        \item are non-increasing in $j$ for any given values of $k$ and $\gamma$, i.e., $\limitweight{j}{k,\gamma} \geq \limitweight{j+1}{k,\gamma}$,
\item have sum $ \sum_{j=1}^{\infty}\limitweight{j}{k,\gamma} = k $. 
    \end{enumerate}
Property (i) directly follows from Lemma \ref{prop:finitep} by taking $m,p \to \infty$ and $m/p \to \gamma$.

To prove property (ii), note that from \eqref{eq:r_infty} we have 
\begin{align*}
    \sum_{j=1}^\infty \limitweight{j}{k,\gamma}
     & = \sum_{j=1}^\infty (e^{\alpha}-1)e^{-\alpha j} + \sum_{j=1}^\infty \limitweight{j}{k-1,\gamma} + \sum_{j=1}^\infty e^{-\alpha j}\limitweight{j}{k-1,\gamma} - \sum_{j=1}^\infty e^{-\alpha(j-1)}\limitweight{j-1}{k-1,\gamma} \\
    & = 1+\sum_{j=1}^\infty \limitweight{j}{k-1,\gamma} \\
    & = k,
\end{align*}
where we use the fact that $\limitweight{j}{0,\gamma} = \limitweight{0}{k,\gamma}=0$.
\end{remark}

We now turn to the proof of Theorem \ref{thm:wd}. Before that, we present a useful lemma regarding an  alternative form of $\limitweight{j}{k,\gamma}$.
\begin{lemma}
\label{lm:closed_alt}
    The asymptotic weights $\limitweight{j}{k,\gamma}$ can be rewritten in the following closed form
\begin{align*}
    \limitweight{j}{k,\gamma} &= \sum^{k}_{i=1}(-1)^{i+1}e^{\alpha i((k-j)-i/2+1/2)}\prod^k_{l=k-i+1}\big(1-e^{-l\alpha}\big).
\end{align*}
\end{lemma}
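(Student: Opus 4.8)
The plan is to derive the alternative closed form directly from the one already established in Theorem~\ref{thm:asymp}(a), namely $\limitweight{j}{k,\gamma} = \sum_{i=1}^{k} (-1)^{k-i}\prod_{l=i}^k\big(e^{-\alpha(j-l)}-e^{-\alpha j}\big)$, via three elementary operations: factoring, reindexing, and evaluating an arithmetic progression that appears in an exponent. No new recurrence, induction, or analytic input is needed.

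First I would factor $e^{-\alpha j}$ out of each term of the inner product, using $e^{-\alpha(j-l)}-e^{-\alpha j} = e^{-\alpha j}\big(e^{\alpha l}-1\big)$, so that $\prod_{l=i}^{k}\big(e^{-\alpha(j-l)}-e^{-\alpha j}\big) = e^{-\alpha j(k-i+1)}\prod_{l=i}^{k}\big(e^{\alpha l}-1\big)$, giving $\limitweight{j}{k,\gamma} = \sum_{i=1}^{k}(-1)^{k-i}e^{-\alpha j(k-i+1)}\prod_{l=i}^{k}\big(e^{\alpha l}-1\big)$. Next I would change the summation index to $i' = k-i+1$, which again ranges over $1,\dots,k$: the sign becomes $(-1)^{k-i}=(-1)^{i'-1}=(-1)^{i'+1}$, the exponent prefactor $k-i+1$ becomes $i'$, and the product range $l=i,\dots,k$ becomes $l=k-i'+1,\dots,k$, yielding $\limitweight{j}{k,\gamma} = \sum_{i'=1}^{k}(-1)^{i'+1}e^{-\alpha j i'}\prod_{l=k-i'+1}^{k}\big(e^{\alpha l}-1\big)$.

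Finally I would pull $e^{\alpha l}$ out of each factor of the remaining product, $e^{\alpha l}-1 = e^{\alpha l}\big(1-e^{-\alpha l}\big)$, so that $\prod_{l=k-i'+1}^{k}\big(e^{\alpha l}-1\big) = e^{\alpha \sum_{l=k-i'+1}^{k} l}\prod_{l=k-i'+1}^{k}\big(1-e^{-\alpha l}\big)$, and evaluate the consecutive-integer sum as $\sum_{l=k-i'+1}^{k} l = i'\cdot\frac{(k-i'+1)+k}{2} = i'\big(k-i'/2+1/2\big)$. Combining the resulting $e^{\alpha i'(k-i'/2+1/2)}$ with the $e^{-\alpha j i'}$ factor produces the exponent $\alpha i'\big((k-j)-i'/2+1/2\big)$, and renaming $i'\to i$ recovers exactly the identity in the statement.

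The argument is entirely routine; the only place demanding care is the bookkeeping in the reindexing step — verifying that the sign, the exponent prefactor, and the lower limit of the product all transform consistently under $i\mapsto k-i+1$ — together with the closed-form evaluation of the arithmetic sum in the exponent. As a safeguard I would check both formulas against $k=1$ (each reduces to $(e^{\alpha}-1)e^{-\alpha j}$) and $k=2$, where the two expressions can be expanded explicitly and matched term by term.
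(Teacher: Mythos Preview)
Your proposal is correct and follows essentially the same route as the paper's own proof: both start from the closed form in Theorem~\ref{thm:asymp}(a), factor $e^{-\alpha j}$ and then $e^{\alpha l}$ out of the product, evaluate the resulting arithmetic sum in the exponent, and reindex. Your version is in fact slightly more streamlined, performing the substitution $i'=k-i+1$ in one step where the paper uses two separate reindexings ($i\to i+1$ followed later by $i\to k-i$).
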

\begin{proof}[Proof of Lemma \ref{lm:closed_alt}]
    Using the closed form from \eqref{eq:closed_wj}, we re-index the sum to obtain
\begin{align*}
\limitweight{j}{k,\gamma} &=\sum_{i=1}^{k} (-1)^{k-i}\prod_{l=i}^k\big(e^{-\alpha(j-l)}-e^{-\alpha j}\big) \\
&= \sum_{i=0}^{k-1} (-1)^{k-i+1}e^{-(k-i)j\alpha}\prod_{l=i+1}^k\big(e^{l\alpha}-1\big).
\end{align*}
Now, we note that
\begin{align*}
    \prod^k_{l=i+1}(e^{l\alpha}-1) 
    &= e^{\alpha\left(\frac{k(k+1)}{2}-\frac{i(i+1)}{2}\right)}\prod^k_{l=i+1}\big(1-e^{-l\alpha}\big).
\end{align*}
Substituting this expression back into $\limitweight{j}{k,\gamma}$, we have
\begin{align*} 
    \limitweight{j}{k,\gamma} &=\sum^{k-1}_{i=0}(-1)^{k-i+1}e^{-(k-i)j\alpha + \alpha\left(\frac{k(k+1)}{2}-\frac{i(i+1)}{2}\right)}\prod^k_{l=i+1}\big(1-e^{-l\alpha}\big) \\
    &= \sum^{k-1}_{i=0}(-1)^{k-i+1}e^{\alpha(k-i)\left(k-j-\frac{k-i}{2}+\frac{1}{2}\right)}\prod^k_{l=i+1}\big(1-e^{-l\alpha}\big).
\end{align*}
Finally, we re-index by setting $i$ to be $k-i$, yielding
\begin{align*}
    \limitweight{j}{k,\gamma} &= \sum^{k}_{i=1}(-1)^{i+1}e^{\alpha i((k-j)-i/2+1/2)}\prod^k_{l=k-i+1}\big(1-e^{-l\alpha}\big). \qedhere
\end{align*}
\end{proof}
We start to prove Theorem \ref{thm:wd}.  For the following proofs, we perform the substitution $d=k-j$.
\begin{proof}[Proof of Theorem \ref{thm:wd}]
To prove property (a), we first prove the existence and derive a closed form for $\llweight{d}{\gamma}$ slightly different from \eqref{eq:closed_wd}.
\begin{lemma}\label{lem:alternative_wd}
    The limit $\llweight{d}{\gamma} = \lim\limits_{\substack{k\to \infty,\, j \to \infty, \\ k-j=d}}\limitweight{j}{k,\gamma}$ exists and has the following closed form
    \begin{equation}
    \label{eq:closed_wd_naive}
        \llweight{d}{\gamma} = \sum^{\infty}_{i=1}(-1)^{i+1}e^{\alpha i(d-i/2+1/2)}.
    \end{equation}
\end{lemma}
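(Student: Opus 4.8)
The plan is to pass to the limit term by term in the alternative closed form of Lemma~\ref{lm:closed_alt}. Writing $d = k-j$ (so $j = k-d$), that lemma reads, for every $k \ge d+1$,
\begin{equation*}
\limitweight{k-d}{k,\gamma} = \sum_{i=1}^{k} (-1)^{i+1}\, e^{\alpha i(d - i/2 + 1/2)} \prod_{l=k-i+1}^{k}\bigl(1 - e^{-l\alpha}\bigr),
\end{equation*}
and the existence of each $\limitweight{j}{k,\gamma}$ on the right is already guaranteed by Lemma~\ref{lm:infinite_p}. For each \emph{fixed} $i$, the factor $(-1)^{i+1}e^{\alpha i(d-i/2+1/2)}$ does not depend on $k$, while $\prod_{l=k-i+1}^{k}(1-e^{-l\alpha})$ is a product of exactly $i$ factors, each of which tends to $1$ as $k\to\infty$; hence the $i$-th summand converges to $(-1)^{i+1}e^{\alpha i(d-i/2+1/2)}$, making the series in \eqref{eq:closed_wd_naive} the only possible value of the limit.

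To turn this into a proof I would invoke Tannery's theorem (dominated convergence for series). Since $0 < 1 - e^{-l\alpha} < 1$ for all $l \ge 1$ and $\alpha>0$, the product $\prod_{l=k-i+1}^{k}(1-e^{-l\alpha})$ lies in $(0,1)$, so --- after extending each summand by zero for $i>k$ --- the $i$-th term is bounded in absolute value, uniformly in $k$, by $b_i \coloneqq e^{\alpha i(d-i/2+1/2)}$. Rewriting the exponent as $\alpha i d + \alpha i/2 - \alpha i^2/2$ shows $b_i$ decays at a Gaussian rate in $i$; concretely $b_{i+1}/b_i = e^{\alpha(d-i)} \to 0$, so $\sum_{i\ge 1} b_i < \infty$ by the ratio test. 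Tannery's theorem then yields
\begin{equation*}
\lim_{k\to\infty}\limitweight{k-d}{k,\gamma} \;=\; \sum_{i=1}^{\infty} (-1)^{i+1}\, e^{\alpha i(d - i/2 + 1/2)},
\end{equation*}
which simultaneously establishes that $\llweight{d}{\gamma}$ exists and produces the closed form \eqref{eq:closed_wd_naive}.

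I do not expect a real obstacle here: the only thing to be careful about is that the dominating sequence $b_i$ is genuinely independent of $k$ and summable, and both facts follow from the crude bound $1-e^{-l\alpha}\in(0,1)$ together with the quadratic term $-\alpha i^2/2$ in the exponent. (If one prefers to avoid Tannery, an alternative is to first show that $\limitweight{k-d}{k,\gamma}$ is Cauchy in $k$ using recurrences \eqref{eq:r1}--\eqref{eq:r2} to get existence, and then identify the limit by passing to the limit in those recurrences; but the Tannery argument delivers existence and the explicit formula in a single step, so that is the route I would take.)
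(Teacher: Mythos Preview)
Your argument is correct. Tannery's theorem applies exactly as you state: the uniform bound $|a_i(k)|\le b_i=e^{\alpha i(d-i/2+1/2)}$ follows from $0<1-e^{-l\alpha}<1$, the Gaussian-type decay $-\alpha i^2/2$ in the exponent makes $\sum_i b_i<\infty$ for every fixed integer $d$, and for fixed $i$ the $i$-factor product $\prod_{l=k-i+1}^k(1-e^{-l\alpha})\to 1$. This gives existence and the closed form simultaneously.

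The paper takes a longer, two-stage route. It first proves existence separately by showing that $k\mapsto\limitweight{k-d}{k,\gamma}$ is nondecreasing and bounded: this requires establishing the monotonicity $\limitweight{j}{k,\gamma}\le\limitweight{j}{k+1,\gamma}$ and then combining recurrences \eqref{eq:r1}--\eqref{eq:r2} to obtain the diagonal relation $\limitweight{j+1}{k,\gamma}-\limitweight{j}{k-1,\gamma}=e^{-\alpha k}\bigl(\limitweight{j}{k,\gamma}-\limitweight{j}{k-1,\gamma}\bigr)\ge 0$. Only then does it identify the limit via an explicit $\epsilon$-argument, splitting the sum from Lemma~\ref{lm:closed_alt} at $i=k/2$ and bounding each half by hand. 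Your Tannery argument collapses all of this into a single dominated-convergence step and avoids the recurrence manipulations entirely; the paper's approach, in exchange, yields the auxiliary fact that the diagonal sequence is monotone, though this does not appear to be used elsewhere.
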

\begin{proof}[Proof of Lemma \ref{lem:alternative_wd}]
We begin by demonstrating the existence of the limit. To establish this, we first present a simple lemma regarding the monotonicity relation. 
\begin{lemma}
\label{lm:increasingk}
For any given values of $j$ and $\gamma$, we have $\limitweight{j}{k,\gamma} \leq \limitweight{j}{k+1,\gamma}$ for all $k \geq 0$.
\end{lemma}
\begin{proof}[Proof of Lemma \ref{lm:increasingk}]
The proof follows directly from the following inequality:   
\begin{equation*}
    \limitweight{j}{k,\gamma} = \lim\limits_{\substack{m\to\infty,\, p\to\infty, \\m/p \to \gamma}}\p(x_{j,p} \in \mathcal{M}_k) \leq \lim\limits_{\substack{m\to\infty,\, p\to\infty, \\m/p \to \gamma}}\p(x_{j,p} \in \mathcal{M}_{k+1}) = w_j(k+1,\gamma). \qedhere
\end{equation*}
\end{proof}
Note that by substituting \eqref{eq:r1} into \eqref{eq:r2}, we have
\begin{align*}
    \limitweight{j+1}{k,\gamma} &= 1-e^{-\alpha k} -e^{-\alpha(k-j)}\big(e^{-\alpha(j-k)}-e^{-\alpha j}\big)\big(1-\limitweight{j}{k-1,\gamma}\big) +e^{-\alpha k}\limitweight{j}{k,\gamma}\\
    &= e^{-\alpha k}w_{j}(k,\gamma) + \big(1-e^{-\alpha k}\big)\limitweight{j}{k-1,\gamma}.
\end{align*}    
Thus, $\limitweight{j+1}{k,\gamma}-\limitweight{j}{k-1,\gamma} = e^{-\alpha k}\big(\limitweight{j}{k,\gamma}-\limitweight{j}{k-1,\gamma}\big)\geq 0$ for all $j, k \geq 1$, where we use Lemma \ref{lm:increasingk}.
Fix $d$, let $j = k-d$, and consider the sequence $\big\{\limitweight{k-d}{k,\gamma}\big\}^\infty_{k=d+1}$. This sequence is bounded and non-decreasing; thus, by the monotone convergence theorem, the limit of $\big\{\limitweight{k-d}{k,\gamma}\big\}^\infty_{k=d+1}$ exists. Hence, $\llweight{d}{\gamma}$ exists and we must now identify its limit.\par
We will prove the closed form \eqref{eq:closed_wd_naive} by showing that for every $\epsilon > 0$, there exists some $K$ such that for all $k > K$,
\begin{align*}
&\bigg|\limitweight{k-d}{k,\gamma}- \sum^{k}_{i=1}(-1)^{i+1}e^{\alpha i(d-i/2+1/2)}\bigg| < \epsilon.
\end{align*}
First, fix $\epsilon$ and suppose $k$ is even. We begin by using the triangle inequality, substituting in the closed form for $\limitweight{k-d}{k,\gamma}$ we obtained in Lemma \ref{lm:closed_alt}:
\begin{align}
&\bigg|\limitweight{k-d}{k,\gamma}- \sum^{k}_{i=1}(-1)^{i+1}e^{\alpha i(d-i/2+1/2)}\bigg| \nonumber \\
    &=\bigg|\sum^{k}_{i=1}(-1)^{i+1}e^{\alpha i(d-i/2+1/2)}\prod^k_{l=k-i+1}\big(1-e^{-l\alpha}\big)- \sum^{k}_{i=1}(-1)^{i+1}e^{\alpha i(d-i/2+1/2)}\bigg| \nonumber  \\
\label{eq:triangle}
    & \leq \sum^{k}_{i=k/2 +1}e^{\alpha i(d-i/2+1/2)}\left(1-\prod^k_{l=k-i+1}\big(1-e^{-l\alpha}\big)\right) + \sum^{k/2}_{i=1}e^{\alpha i(d-i/2+1/2)}\left(1-\prod^k_{l=k-i+1}\big(1-e^{-l\alpha}\big)\right).
\end{align}

Suppose that $k > 2d$. We bound the first sum of \eqref{eq:triangle}. By basic inequalities, we have 
\begin{align*}
    &\sum^{k}_{i=k/2 +1}e^{\alpha i(d-i/2+1/2)}\left(1-\prod^k_{l=k-i+1}\big(1-e^{-l\alpha}\big)\right) \leq \sum^{k}_{i=k/2 +1}e^{\alpha i(d-i/2+1/2)} \leq \frac{k}{2}\cdot e^{\alpha(k/2)(d-k/4+1/2)}.
\end{align*}
When $k \to \infty$, this final expression will go $0$, so there exists some $K_1$ such that for all $k> K_1$, the final expression is less than $\epsilon/2$. \\
We perform a similar computation to bound the second sum in expression \eqref{eq:triangle}. Note that
\begin{align*}
    \prod^k_{l=k-i+1}\big(1-e^{-l\alpha}\big) \geq \big(1-e^{-(k-i+1)\alpha}\big)^{i} \geq \big(1-e^{-(k/2+1)\alpha}\big)^{k/2}.
\end{align*}
Thus,
\begin{equation}
\label{eq:subst}
    1-\prod^k_{l=k-i+1}\big(1-e^{-l\alpha}\big) \leq 1-(1-e^{-(k/2+1)\alpha})^{k/2} \leq \frac{k}{2}\cdot e^{-(k/2+1)\alpha}.
\end{equation}
Substituting expression \eqref{eq:subst} back into the second sum of expression \eqref{eq:triangle}, we have that
\begin{align*}
    \sum^{k/2}_{i=1}e^{\alpha i(d-i/2+1/2)}\left(1-\prod^k_{l=k-i+1}\big(1-e^{-l\alpha}\big)\right) &\leq \frac{k}{2}\sum^{k/2}_{i=1}e^{\alpha i(d-i/2+1/2)}e^{-(k/2+1)\alpha} \\
    &\leq \frac{k}{2}\cdot \frac{k}{2} \cdot e^{\alpha d(d-d/2+1/2)}e^{-(k/2+1)\alpha}.
\end{align*}
Note that as $k\to\infty$, this final expression goes to $0$, so there exists some $K_2$ such that for all $k > K_2$, this expression is less than $\epsilon/2$.\\
Therefore, for all $k > \max(K_1,K_2)$,
\begin{align*}
    &\bigg|\limitweight{k-d}{k,\gamma}- \sum^{k}_{i=1}(-1)^{i+1}e^{\alpha i(d-i/2+1/2)}\bigg| \\&\leq \frac{k}{2}\cdot e^{\alpha(k/2)(d-k/4+1/2)} + \frac{k^2}{4}\cdot e^{\alpha(d(d-d/2+1/2)-(k/2+1))}\\
    &\leq \epsilon/2 + \epsilon/2 = \epsilon.
\end{align*}
Thus, we have shown that $\limitweight{j}{k,\gamma}$ has a limit, and the limit is
\begin{align*}
    \llweight{d}{\gamma} &= \lim\limits_{\substack{k\to\infty, \, j \to \infty, \\ k-j = d}}\limitweight{j}{k,\gamma} \\
    &= \sum^{\infty}_{i=1}(-1)^{i+1}e^{\alpha i(d-i/2+1/2)}. \qedhere
\end{align*}
\end{proof}
Now, we show that this form we just proved is the same as \eqref{eq:closed_wd} to finish the proof of the property (a). Using the difference of squares formula,
\begin{align*}
    \sum^{\infty}_{i=1}(-1)^{i+1}e^{\alpha i(d-i/2+1/2)}
    &= \sum^{\infty}_{i=1}(-1)^{i+1}e^{(\alpha/2)[(d+1/2)^2-(i-(d+1/2))^2]} \\
    &= e^{(\alpha/2)(d+1/2)^2}\sum^{\infty}_{i=1}(-1)^{i+1}e^{-(\alpha/2)(i-(d+1/2))^2}.
\end{align*}
However, we can cancel some terms in the following way. For $d \geq 0$,
\begin{equation}
    \label{eq:cancel}
    \sum^{2d}_{i=1}(-1)^{i+1}e^{-(\alpha/2)(i-(d+1/2))^2} = \sum^{d}_{i=1}(-1)^{i+1}\big(e^{-(\alpha/2)(i-(d+1/2))^2}-e^{-(\alpha/2)((d+1/2)-i)^2}\big) = 0.
\end{equation}
Similarly, for $d<0$,
\begin{equation}
    \label{eq:cancel2}
    \sum^{2d+1}_{i=0}(-1)^{i+1}e^{-(\alpha/2)(i-(d+1/2))^2} = 0.
\end{equation}
Therefore, using the cancellation of \eqref{eq:cancel} and \eqref{eq:cancel2},
\begin{align*}
 &   e^{(\alpha/2)(d+1/2)^2}\sum^{\infty}_{i=1}(-1)^{i+1}e^{-(\alpha/2)(i-(d+1/2))^2} \\
 & \qquad = e^{(\alpha/2)(d+1/2)^2}\sum^{\infty}_{i=2d+1}(-1)^{i+1}e^{-(\alpha/2)(i-(d+1/2))^2} \\
     & \qquad= e^{(\alpha/2)(d+1/2)^2}\sum^{\infty}_{i=0}(-1)^{i+(2d+1)+1}e^{-(\alpha/2)(i+(2d+1)-(d+1/2))^2} \\
    & \qquad= \sum^{\infty}_{i=0}(-1)^{i}e^{-\alpha i(d+i/2+1/2)}. 
\end{align*}
To prove property (b), using the closed form from \eqref{eq:closed_wd}, note that
\begin{align*}
    1-\llweight{-(d+1)}{\gamma} &= 1- \sum^{\infty}_{i=1}(-1)^{i+1}e^{\alpha i(-(d+1)-i/2+1/2)} \\
    &= 1+ \sum^{\infty}_{i=1}(-1)^{i}e^{-\alpha i(d+i/2+1/2)} \\
    &= \sum^{\infty}_{i=0}(-1)^{i}e^{-\alpha i(d+i/2+1/2)} \\
    &= \llweight{d}{\gamma}.
\end{align*}
Therefore, $\llweight{d}{\gamma}$ satisfies symmetry property $ \llweight{d}{\gamma} = 1-\llweight{-(d+1)}{\gamma} $.

To prove property (c), by basic asymptotic results, we write the bounds from \eqref{eq:bounds_wj} as
\begin{align*}
    \lim\limits_{\substack{k \to\infty, \, j \to \infty, \\ k-j = d}}\frac{1-e^{-\alpha k}}{1-e^{-\alpha k}+e^{-\alpha(k-j)}} &= \frac{1}{1+e^{-\alpha d}}, \text{ and}  \\
    \lim\limits_{\substack{k \to\infty, \, j \to \infty, \\ k-j = d}}\frac{1-e^{-\alpha k}}{1-e^{-\alpha k}+e^{-\alpha(k-j+1)}} &= \frac{1}{1+e^{-\alpha(d+1)}} .
\end{align*}
Therefore, $\llweight{d}{\gamma}$ satisfies
\begin{equation}
\label{eq:bounds_wd1}
    \frac{1}{1+e^{-\alpha d}} \leq \llweight{d}{\gamma} \leq  \frac{1}{1+e^{-\alpha(d+1)}}.
\end{equation}
Additionally, using the bounds from \eqref{eq:bounds_wd1}, we have
\begin{align}
\label{eq:wd_non-dec}
     \llweight{d}{\gamma} \leq  \frac{1}{1+e^{-\alpha(d+1)}} \leq \llweight{d+1}{\gamma},
\end{align}
demonstrating that $\llweight{d}{\gamma}$ is non-decreasing for $d \in \Z$. The inequality $\llweight{d}{\gamma} \geq \frac{1}{1+e^{-\alpha(d+1/2)}}$ for $d\geq 0$ follows directly from \eqref{eq:w_lower}. Furthermore, we obtain $\llweight{d}{\gamma} \leq \frac{1}{1+e^{-\alpha(d+1/2)}}$ for $d < 0$ by using the relation $\llweight{d}{\gamma} = 1- \llweight{-(d+1)}{\gamma}$.

To prove property (d), note that property (i) in the footnote follows immediately from \eqref{eq:wd_non-dec}. 
Next, we prove properties (ii) and (iii) in the footnote. To this end, using the two different series representations \eqref{eq:closed_wd} and \eqref{eq:closed_wd_naive} of $\llweight{d}{\gamma}$ for integer $d$, we extend the definition of $\llweight{d}{\gamma}$ to all real $ d $ as follows. When $ d < -1/2  $, let $ \llweight{d}{\gamma} = \sum_{i=1}^{\infty}(-1)^{i+1}e^{\alpha i(d-i/2+1/2)} $ and when $ d \geq -1/2 $, let $ \llweight{d}{\gamma} = \sum_{i=0}^{\infty}(-1)^ie^{-\alpha i(d+i/2+1/2)} $. For $d > -1/2 $, note that $\llweight{d}{\gamma}$ has second derivative
    \begin{align}
    \label{eq:secondwd}
       \frac{\partial^2}{\partial d^2}\llweight{d}{\gamma} = \alpha^2\sum_{i=0}^{\infty}(-1)^ii^2e^{-\alpha i(d+i/2+1/2)}= \alpha^2\sum_{i=0}^{\infty}(-1)^iz^{2i}i^2q^{i^2},
    \end{align}
    and $z,q \in (0, 1) $,
    where $ q = e^{-\alpha/2} $ and $ z = e^{-(\alpha/2)(d+1/2)} $.
    Now, consider the following identity from \citep[Proposition 1.1]{coogan2003q}:
    \begin{align}
    \label{eq:sigmoid_identity}
       \sum_{i=0}^{\infty}(-1)^iz^{2i}q^{i^2} = \frac{1}{1+z}\sum_{i=0}^{\infty}z^i\prod_{j=0}^{i-1}\frac{1-zq^j}{1+zq^{j+1}},\quad |z| < 1, \; |q| \leq 1.
    \end{align}
Differentiating both sides of \eqref{eq:sigmoid_identity} with respect to $q$ gives
\begin{align}
\label{eq:sigmoid_dq}
    \sum_{i=0}^{\infty}(-1)^iz^{2i}i^2q^{i^2-1} = \frac{1}{1+z}\sum_{i=1}^{\infty}z^i \frac{\partial}{\partial q}\prod_{j=0}^{i-1}\frac{1-zq^j}{1+zq^{j+1}}.
\end{align}
By the product rule and quotient rule on \eqref{eq:sigmoid_dq},
\begin{align}
\label{eq:prod}
    \frac{\partial}{\partial q}\prod_{j=0}^{i-1}\frac{1-zq^j}{1+zq^{j+1}} 
= \sum_{l=0}^{i-1} \frac{q^{l-1}z(q^{l+1}z-l(q+1)-q)}{(1+zq^{l+1})^2}\prod_{0 \leq j\leq i-1, \, j \neq l}\frac{1-zq^j}{1+zq^{j+1}}.
\end{align}
Since $ q^{l+1}z-l(q+1)-q \leq 0 $ for all $l \geq 0$ and $(1-zq^j)/(1+zq^{j+1}) \geq 0$ for all $j \geq 0$ when $z,q \in [0, 1)$, it follows that \eqref{eq:prod} is non-positive, implying the same for \eqref{eq:sigmoid_dq}. Returning to \eqref{eq:secondwd}, we deduce that $\llweight{d}{\gamma}$ is concave for all $ d > -1/2$. An analogous argument shows that $\llweight{d}{\gamma}$ is convex for $ d < -1/2 $. This proves properties (ii) and (iii) in the footnote when $ d \neq -1,0 $. For the $ d = -1,0 $ cases, we have from the just established convexity/concavity and continuity that
\begin{equation}
\begin{aligned}
\label{eq:cases}
\frac{1}{3}\llweight{-2}{\gamma}+ \frac{2}{3}\lim_{d\uparrow-1/2}\llweight{d}{\gamma} 
 & \geq \llweight{-1}{\gamma}, \\
\frac{2}{3}\llweight{-1/2}{\gamma} + \frac{1}{3}\llweight{1}{\gamma} & \leq \llweight{0}{\gamma},
\end{aligned}
\end{equation}
where $ \lim_{d\uparrow-1/2}\llweight{d}{\gamma} = 1-\llweight{-1/2}{\gamma} $.
We will show that these equations \eqref{eq:cases} together imply properties (ii) and (iii) in the footnote when $ d = -1,0$. Expanding the power series \eqref{eq:sigmoid_identity}, ones obtains
\begin{equation} \label{eq:w_lower}
\llweight{d}{\gamma} = \frac{1}{1+z}\sum_{i=0}^{\infty}z^i\prod_{j=0}^{i-1}\frac{1-zq^j}{1+zq^{j+1}} \geq \frac{1}{1+z}\sum_{i=0}^{\infty}z^i\prod_{j=0}^{i-1}\frac{1-z}{1+z} = \frac{1}{1+z^2} = \frac{1}{1+e^{-\alpha(d+1/2)}}.
\end{equation}
Thus, we have $ \llweight{-1/2}{\gamma} \geq 1/2 $ and hence $ \llweight{-1/2}{\gamma} \geq \lim_{d\uparrow-1/2}\llweight{d}{\gamma} $. Thus, from \eqref{eq:cases}, we obtain
\begin{equation}
\label{eq:iden}
3\llweight{-1}{\gamma} + \llweight{1}{\gamma} \leq 3\llweight{0}{\gamma} + \llweight{-2}{\gamma}.
\end{equation}
Finally, the identity $\llweight{d}{\gamma} = 1-\llweight{-(d+1)}{\gamma}$ for $ d \in \mathbb{Z} $ from part (c) of Theorem \ref{thm:wd} applied to \eqref{eq:iden} yields properties (ii) and (iii) in the footnote when $ d = -1,0$.
\end{proof}

\subsection{Proof of Theorem \ref{thm:training}}
\label{subsec:proof_of_training_err}

From the decompositions \eqref{eq:decomp_ensemble} and \eqref{eq:decomp_fs}, we have the following identity
\begin{align}
\label{eq:training_decomp}
    \|\mathbf{y}-\hat{\mathbf{f}}_{k,p}\|^2 - \|\mathbf{y}-\hat{\mathbf{f}}_{k,m,p}\|^2 = \sum_{j=k+1}^{p}\hat\beta^2_{j,p}\big(1-\big(-\weight{j}{k,m,p}\big)^2\big) - \sum_{j=1}^{k}\hat\beta^2_{j,p}\big(1-\weight{j}{k,m,p}\big)^2.
\end{align}
Let $ m_p = \lfloor \gamma p \rfloor $, where $0 < \gamma < 1 $. Then, using \eqref{eq:training_decomp}, we have
\begin{align}
&  \mathbb{E}\Big[\|\mathbf{y}-\hat{\mathbf{f}}_{k,p}\|^2 - \min_m \|\mathbf{y}-\hat{\mathbf{f}}_{k,m,p}\|^2\Big] \notag\\ & \qquad\geq  \sum_{j=k+1}^{2k}\mathbb{E}[\hat\beta^2_{j,p}]\big(1-\big(-\weight{j}{k,m_p,p}\big)^2\big) - \sum_{j=1}^k \mathbb{E}[\hat\beta^2_{j,p}]\big(1-\weight{j}{k,m_p,p}\big)^2  
\label{eq:training_bound}.
\end{align}
To simplify further, we use Chebyshev's sum inequality \citep{hardy1988inequalities}, noting that $\mathbb{E}[\hat\beta^2_{j,p}]$ is non-increasing in $j$ by definition and $\weight{j}{k,m_p, p}$ is non-increasing in $j$ by Theorem \ref{thm:asymp}. Thus, the second term in \eqref{eq:training_bound} is upper bounded by
\begin{align}
\label{eq:training_upper}
    \sum_{j=1}^k\mathbb{E}[\hat\beta^2_{j,p}]\big(1-\weight{j}{k,m_p,p}\big)^2 &\leq \Bigg(\frac{1}{k}\sum_{j=1}^k \mathbb{E}[\hat\beta^2_{j,p}]\Bigg)\Bigg(\sum_{j=1}^k\big(1-\weight{j}{k,m_p,p}\big)^2\Bigg),
\end{align}
and the first term in \eqref{eq:training_bound} is lower bounded by
\begin{align}
\label{eq:training_lower}
 & \sum_{j=k+1}^{2k}\mathbb{E}[\hat\beta^2_{j,p}]\big(1-\big(1-\weight{j}{k,m_p,p}\big)^2\big)  \nonumber\\ & \qquad\qquad \geq \Bigg(\frac{1}{k}\sum_{j=k+1}^{2k} \mathbb{E}[\hat\beta^2_{j,p}]\Bigg)\Bigg(\sum_{j=k+1}^{2k}\big(1-\big(1-\weight{j}{k,m_p,p}\big)^2\big)\Bigg).
\end{align}

We claim that for a fixed $k$ and a sequence $\{\gamma_p\}$ that depends on both $p$ and $k$, with $\gamma_p \geq C >0$ where $C$ is some constant depending on $k$, if we set $m_p = \lfloor \gamma_p p\rfloor $, then there exists a  constant $C(k, \epsilon)$  such that when $p(1-\gamma_p)^3 \geq C(k,\epsilon)$, 
\begin{align}
    \sum_{j=k+1}^{2k}\big(1-\big(-\weight{j}{k,m_p,p}\big)^2\big) \geq (1-\epsilon)\sum_{j=k+1}^{2k}\big(1-\big(-\limitweight{j}{k,\gamma_p}\big)^2\big), \label{ineq:epsilon1}
\end{align}
and
\begin{align}
    \sum_{j=1}^k\big(1-\weight{j}{k,m_p,p}\big)^2 \leq (1+\epsilon)\sum_{j=1}^k\big(1-\limitweight{j}{k,\gamma_p}\big)^2. \label{ineq:epsilon2}
\end{align}
For simplicity of notation, throughout the proof of this theorem, we use the notation ``$A \lesssim B$'' to mean that there exists a constant $C$ depending on $k$ such that $A \leq CB$. To prove the claim, we first show that  when $p(1-\gamma_p)^2$ is sufficiently large,
\begin{align}
    \frac{\Big|\sum_{j=k+1}^{2k}\big(1-\big(-\weight{j}{k,m_p,p}\big)^2\big) -  \sum_{j=k+1}^{2k}\big(1-\big(-\limitweight{j}{k,\gamma_p}\big)^2\big)\Big|}{\sum_{j=k+1}^{2k}\big(1-\big(-\limitweight{j}{k,\gamma_p}\big)^2\big)} \lesssim \frac{1}{p(1-\gamma_p)^2}. \label{ineq:finite_inter1}
\end{align}
If \eqref{ineq:finite_inter1} holds, then it is clear that the claim for \eqref{ineq:epsilon1} holds.

To prove \eqref{ineq:finite_inter1}, we revisit the proof of Lemma \ref{lm:infinite_p}. Note that
\begin{align*}
    \frac{\binom{p-j}{m_p-1}}{\binom{p}{m_p}} = \frac{m_p}{p-j-m_p+1}\prod^j_{i=1}\frac{i+p-m_p}{i+p} = \frac{1+p-m_p}{p-j-m_p+1}\frac{m_p}{1+p}\prod^j_{i=2}\frac{i+p-m_p}{i+p}.
\end{align*}
For a fixed $2\leq j \leq 2k$, we have that for any $2\leq i \leq j$, $\big|\frac{i+p-m_p}{i+p} - (1-\gamma_p)\big| \lesssim \frac{1}{p+i}$. Additionally, $|\frac{m_p}{1+p} - \gamma_p| \lesssim \frac{1}{p+1}$.  Thus, we know when $p$ is sufficiently large, $\big|\frac{m}{1+p}\prod^j_{i=2}\frac{i+p-m}{i+p} - \gamma(1-\gamma)^{j-1}\big|\lesssim \frac{1}{p+1}$.  Next, we observe that when $p(1-\gamma_p)$ is sufficiently large,
\begin{align}
    \frac{1+p-m_p}{p-j-m_p+1} - 1 = \frac{j}{p-j-m_p+1} \lesssim \frac{1}{p(1-\gamma_p)}.
\end{align}
Therefore, when $p(1-\gamma_p)$ is sufficiently large, we have
\begin{align*}
    \left|\frac{\binom{p-j}{m_p-1}}{\binom{p}{m_p}} - \gamma_p(1-\gamma_p)^{j-1}\right| \lesssim \frac{1}{p(1-\gamma_p)}.
\end{align*}
Additionally, using a similar analysis,  we know that when $p(1-\gamma_p)$ is sufficiently large,
\begin{align*}
 \bigg|\frac{\binom{p-j}{m_p}}{\binom{p}{m_p}} - \big(1-\gamma_p)^j\bigg| = \bigg|\prod_{i=0}^{j-1}\frac{p-m_p-i}{p-i} - \big(1-\gamma_p)^j\bigg| \lesssim \frac{1}{p},
\end{align*}
and
\begin{align*}
    \left|\frac{\binom{p-j+1}{m_p}}{\binom{p}{m_p}} - \big(1-\gamma_p)^{j-1} \right| \lesssim \frac{1}{p(1-\gamma_p)}.
\end{align*}
According to the recurrence \eqref{eq:r_finite} and the fact that  $w_j^{0,m_p,p} = 0$ for all $j$ and all $p$, we conclude that for any $1\leq j \leq 2k$, the quantity $\weight{j}{k,m_p,p}$ can be determined through a finite number of recursive steps (depending on $k$). In other words, the approximation error $\big|\weight{j}{k,m_p,p} - \limitweight{j}{k,\gamma_p}\big|$ accumulates only over a finite number of recursive steps (depending on $k$). Therefore, for any $1\leq j \leq 2k$, when $p(1-\gamma_p)$ is sufficiently large, we have the error bound $\big|\weight{j}{k,m_p,p} - \limitweight{j}{k,\gamma_p}\big| \lesssim \frac{1}{p(1-\gamma_p)}$. Thus, it holds that 
\begin{align}
    \Bigg|\sum_{j=k+1}^{2k}\big(1-\big(-\weight{j}{k,m_p,p}\big)^2\big) -  \sum_{j=k+1}^{2k}\big(1-\big(-\limitweight{j}{k,\gamma_p}\big)^2\big)\Bigg| \lesssim \frac{1}{p(1-\gamma_p)}. \label{ineq:finite_numer}
\end{align}
Next, we show that when $\gamma_p \gtrsim 1$, it holds that 
\begin{align}
    \sum_{j=k+1}^{2k}\big(1-\big(-\limitweight{j}{k,\gamma_p}\big)^2\big) \gtrsim 1-\gamma_p. \label{ineq:finite_denom}
\end{align}
To lower bound $ \sum_{j=k+1}^{2k}\big(1-\big(-\limitweight{j}{k,\gamma_p}\big)^2\big) $ from \eqref{ineq:finite_denom}, note that
\begin{align}
    1-\big(1-\limitweight{j}{k,\gamma_p}\big)^2 = \limitweight{j}{k,\gamma_p}\big(2-\limitweight{j}{k,\gamma_p}\big),
\end{align}
so we wish to have simple upper and lower bounds on $\limitweight{j}{k,\gamma_p}$ for $j \geq k+1$ to lower bound each $1-\big(1-\limitweight{j}{k,\gamma_p}\big)^2$. Recall that $1-\gamma_p = e^{-\alpha_p}$. Since $ j \geq k+1 $, we have the following lower bound $\limitweight{j}{k,\gamma_p}$ from \eqref{eq:bounds_wj}:
\begin{align}
\label{eq:wj_simple_lower}
    \limitweight{j}{k,\gamma_p} \geq \frac{1-e^{-\alpha_p k}}{1-e^{-\alpha k}+e^{-\alpha_p(k-j)}} \geq e^{-\alpha_p(j-k)}\frac{1-e^{-\alpha_p k}}{1+e^{-\alpha_p}}, \quad j \geq k+1.
\end{align}
Now, from \eqref{eq:bounds_wj}, observe that for all $j \geq k+1$,
\begin{align}
\label{eq:wj_simple_upper}
    \limitweight{j}{k,\gamma_p} \leq \frac{1-e^{-\alpha_p k}}{1-e^{-\alpha_p k}+e^{-\alpha_p(k-j+1)}} \leq \frac{1-e^{-\alpha_p k}}{1-e^{-\alpha_p k}+1} \leq 1/2. 
\end{align}
From \eqref{eq:wj_simple_lower} and \eqref{eq:wj_simple_upper}, we obtain 
\begin{align}
    \label{eq:training_lower_subst}
    1-(1-\limitweight{j}{k,\gamma_p}\big)^2 = \limitweight{j}{k,\gamma_p}(2-\limitweight{j}{k,\gamma_p}) \geq \frac{3}{2}e^{-\alpha_p(j-k)}\frac{1-e^{-\alpha_p k}}{1+e^{-\alpha_p}}, \quad j \geq k+1.
\end{align}
Substituting in \eqref{eq:training_lower_subst}, we thus have
\begin{align}
    \label{eq:training_lower_bound}
    \sum_{j=k+1}^{2k}\big(1-\big(-\limitweight{j}{k,\gamma_p}\big)^2\big) \geq \frac{3}{2}e^{-\alpha_p}\frac{1-e^{-\alpha_p k}}{1-e^{-\alpha_p}}\frac{1-e^{-\alpha_p k}}{1+e^{-\alpha_p}} = \frac{3}{2}\frac{e^{-\alpha_p}(1-e^{-\alpha_p k})^2}{1-e^{-2\alpha_p}}.
\end{align}
Noting that $\gamma_p \gtrsim$ implies $\alpha_p \gtrsim 1$, we obtain 
\begin{align*}
    \sum_{j=k+1}^{2k}\big(1-\big(-\limitweight{j}{k,\gamma_p}\big)^2\big) \geq \frac{3}{2}e^{-\alpha_p}(1-e^{-\alpha_p k})^2 \gtrsim e^{-\alpha_p} = 1-\gamma_p,
\end{align*}
which establishes \eqref{ineq:finite_denom}. Combining \eqref{ineq:finite_numer} and  \eqref{ineq:finite_denom} proves \eqref{ineq:finite_inter1}, which in turn confirms \eqref{ineq:epsilon1}.

To prove \eqref{ineq:epsilon2}, we show that when $\gamma_p \gtrsim 1$,  
\begin{align}
    \sum_{j=1}^k\big(1-\limitweight{j}{k,\gamma_p}\big)^2 \gtrsim \big(1-\gamma_p)^2. \label{ineq:finite_inter2}
\end{align}
When $p(1-\gamma_p)^3$ is sufficiently large, if \eqref{ineq:finite_inter2} holds, then using the fact that $\big|\weight{j}{k,m_p,p} - \limitweight{j}{k,\gamma_p}\big| \lesssim \frac{1}{p(1-\gamma_p)}$, we know
\begin{align*}
    \frac{\Big| \sum_{j=1}^k\big(1-\weight{j}{k,m_p,p}\big)^2 - \sum_{j=1}^k\big(1-\limitweight{j}{k,\gamma_p}\big)^2\Big|}{\sum_{j=1}^k\big(1-\limitweight{j}{k,\gamma_p}\big)^2} \lesssim \frac{1}{p(1-\gamma_p)^3},
\end{align*}
which implies \eqref{ineq:epsilon2}.

Note that from \eqref{eq:bounds_wj}, we have
\begin{align*}
    \limitweight{j}{k,\gamma_p} \leq \frac{1-e^{-\alpha_p k}}{1-e^{-\alpha_p k}+e^{-\alpha_p(k-j+1)}} \leq \frac{1}{1+e^{-\alpha_p(k-j+1)}}. 
\end{align*}
Thus,
\begin{align*}
    \sum_{j=1}^k\big(1-\limitweight{j}{k,\gamma_p}\big)^2 \geq \sum_{j=1}^k \frac{e^{-2\alpha_p(k-j+1)}}{\big(1+e^{-\alpha_p(k-j+1)}\big)^2} \geq \frac{1}{4}\sum_{j=1}^k e^{-2\alpha_p(k-j+1)} = \frac{e^{-2\alpha_p}(1-e^{-2\alpha_p k})}{4(1-e^{-2\alpha_p})}.
\end{align*}
Noting that $\gamma_p \gtrsim$ implies $\alpha_p \gtrsim 1$, we obtain \eqref{ineq:finite_inter2}.

Therefore, the claim is proven for \eqref{ineq:epsilon1} and \eqref{ineq:epsilon2}.  
To upper bound the sum $\sum_{j=1}^k\big(1-\limitweight{j}{k,\gamma_p}\big)^2$ from \eqref{ineq:epsilon2}, we use 
 the following rearranged version of recurrence \eqref{eq:r2}:
 \begin{align}
 \label{eq:r2_rearranged}
     1-\limitweight{j}{k,\gamma_p} = e^{-\alpha_p k}(1-\limitweight{j-1}{k,\gamma_p}) + e^{-\alpha_p(k-j+1)}\limitweight{j-1}{k,\gamma_p}.
 \end{align}
Upon squaring both sides of \eqref{eq:r2_rearranged}, we obtain
\begin{align}
\label{eq:r2_squared}
 \big(1-\limitweight{j}{k,\gamma_p}\big)^2 & = e^{-2\alpha_p k}\big(1-\limitweight{j-1}{k,\gamma_p}\big)^2 + 
\nonumber\\ & \qquad 2e^{-\alpha_p(2k-j+1)}\limitweight{j-1}{k,\gamma_p}(1-\limitweight{j-1}{k,\gamma_p}) + e^{-2\alpha_p(k-j+1)}\big(\limitweight{j-1}{k,\gamma_p}\big)^2.
\end{align}
Summing \eqref{eq:r2_squared} over $1 \leq  j  \leq k$, using the Cauchy-Schwarz inequality, and the fact from Theorem \ref{thm:asymp} that $\limitweight{j-1}{k,\gamma_p} \geq \limitweight{j}{k,\gamma_p}$, we obtain 
\begin{align}
\label{eq:training_upper_quad}
 \sum_{j=1}^k\big(1-\limitweight{j}{k,\gamma_p}\big)^2 &\leq e^{-2\alpha_p k}\sum_{j=1}^k\big(1-\limitweight{j-1}{k,\gamma_p}\big)^2 \nonumber\\ & \qquad + 2e^{-\alpha_p k}\sqrt{\sum_{j=1}^k\big(1-\limitweight{j-1}{k,\gamma_p}\big)^2\sum_{j=1}^k e^{-2\alpha_p(k-j+1)}\big(\limitweight{j-1}{k,\gamma_p}\big)^2} \nonumber\\ &\qquad\qquad + \sum_{j=1}^k e^{-2\alpha_p(k-j+1)}\big(\limitweight{j-1}{k,\gamma_p}\big)^2 \nonumber\\
& \leq e^{-2\alpha_P k}\sum_{j=1}^k\big(1-\limitweight{j}{k,\gamma_p}\big)^2 \nonumber\\ & \qquad + 2e^{-\alpha_p k}\sqrt{\sum_{j=1}^k\big(1-\limitweight{j}{k,\gamma_p}\big)^2\sum_{j=1}^k e^{-2\alpha_p(k-j+1)}\big(\limitweight{j-1}{k,\gamma_p}\big)^2} \\ &\qquad\qquad + \sum_{j=1}^k e^{-2\alpha_p(k-j+1)}\big(\limitweight{j-1}{k,\gamma_p}\big)^2. \nonumber
\end{align}
Solving the quadratic inequality \eqref{eq:training_upper_quad} yields
\begin{align}
\label{eq:training_upper_wj-1}
    \sum_{j=1}^k\big(1-\limitweight{j}{k,\gamma_p}\big)^2 \leq \frac{\sum_{j=1}^k e^{-2\alpha_p(k-j+1)}\big(\limitweight{j-1}{k,\gamma_p}\big)^2}{(1-e^{-\alpha_p k})^2}.
\end{align}
Now, from the upper bound \eqref{eq:bounds_wj}, we have
$ \big(\limitweight{j-1}{k,\gamma_p}\big)^2 \leq (1-e^{-\alpha_p k})^2$ for all $j\geq 1$. Substituting this into \eqref{eq:training_upper_wj-1}, we obtain the bound
\begin{align}
\label{eq:training_upper_bound}
    \sum_{j=1}^k \big(1-\limitweight{j}{k,\gamma_p}\big)^2 \leq \sum_{j=1}^ke^{-2\alpha_p(k-j+1)} = \frac{e^{-2\alpha_p}(1-e^{-2\alpha_p k})}{1-e^{-2\alpha_p}}.
\end{align}
To simplify notation, set 
\begin{align}
\label{eq:limsup_p_notation}
    A = \frac{1}{k}\sum_{j=1}^k\mathbb{E}[\hat\beta^2_{j,p}], \qquad B = \frac{1}{k}\sum_{j=k+1}^{2k}\mathbb{E}[\hat\beta^2_{j,p}],
\end{align}
and note that $A \geq B$ by applying Definition \ref{def:betaj}.
Using \eqref{eq:limsup_p_notation} and by the previous inequalities in \eqref{eq:training_upper_bound} and \eqref{eq:training_lower_bound}, the training error gap \eqref{eq:training_bound} is at least
\begin{align}
  &  \frac{3}{2}(1-\epsilon)B\frac{e^{-\alpha_p}(1-e^{-\alpha_p k})^2}{1-e^{-2\alpha_p}} - A(1+\epsilon)\frac{e^{-2\alpha_p}(1-e^{-2\alpha_p k})}{1-e^{-2\alpha_p}}\notag \\
     &\qquad = \Big(\frac{1-e^{-\alpha_p k}}{1-e^{-\alpha_p}}\Big)\Big[\frac{3}{2}(1-\epsilon)B\frac{e^{-\alpha_p}(1-e^{-\alpha_p k})}{1+e^{-\alpha_p}} - (1+\epsilon)A\frac{e^{-2\alpha_p}(1+e^{-\alpha_p k})}{1+e^{-\alpha_p}}\Big]\notag\\
     &\qquad \geq \frac{3}{2}(1-\epsilon)Be^{-\alpha_p}\frac{1-e^{-\alpha_p}}{1+e^{-\alpha_p}} - (1+\epsilon)Ae^{-2\alpha_p}\notag\\
     &\qquad \geq \frac{3}{2}(1-\epsilon)Be^{-\alpha_p}\frac{1-e^{-\alpha_0}}{1+e^{-\alpha_0}} - (1+\epsilon)Ae^{-2\alpha_p}, \label{eq:training_error_lower_complete}
\end{align}
for any $\alpha_0 \leq \alpha_p$. Take $\alpha_p^\star$ such that
\begin{align}
\label{eq:ea_subst}
   \alpha_p^\star = \log\Big(\frac{4}{3}\cdot\frac{1+e^{-\alpha_0^{\star}}}{1-e^{-\alpha_0^{\star}}}\cdot \frac{A}{B} \Big),
\end{align}
where the corresponding $\gamma_p^\star$ is given by $\gamma_p^\star = 1-e^{-\alpha_p^\star}$,  and $\alpha_0^\star = (7+\sqrt{97})/6$. It is straightforward to verify that
\begin{equation}
\alpha_p^\star = \log\Big(\frac{4}{3}\cdot\frac{1+e^{-\alpha_0^\star}}{1-e^{-\alpha_0^\star}}\cdot \frac{A}{B} \Big) \geq \log\Big(\frac{4}{3}\cdot\frac{1+e^{-\alpha_0^\star}}{1-e^{-\alpha_0^\star}}\Big) = \alpha_0^\star.
\end{equation}
Thus, substituting $\alpha_p^\star$ and $\alpha_0^\star$ into \eqref{eq:training_error_lower_complete} and taking $\epsilon=0.05$ gives us the lower bound on expected training error gap \eqref{eq:training_bound} as
\begin{align*}
& \Big(\frac{3}{4}\Big)\Big(\frac{3}{4}-\frac{9}{4}\epsilon\Big)\Big(\frac{1-e^{-\alpha_0^{\star}}}{1+e^{-\alpha_0^{\star}}}\Big)^2\frac{\big(\frac{1}{k}\sum_{j=k+1}^{2k}\mathbb{E}[\hat\beta^2_{j,p}]\big)^2}{\frac{1}{k}\sum_{j=1}^k \mathbb{E}[\hat\beta^2_{j,p}]} \\ & \qquad \geq \frac{1}{4}\frac{\big(\frac{1}{k}\sum_{j=k+1}^{2k}\mathbb{E}[\hat\beta^2_{j,p}]\big)^2}{\frac{1}{k}\sum_{j=1}^k \mathbb{E}[\hat\beta^2_{j,p}]}.
\end{align*}
Therefore, for any fixed $k \geq 1$, when $p(1-\gamma_p^\star)^3 \gtrsim 1$, we have
\begin{align*}
    \mathbb{E}\Big[\|\mathbf{y}-\hat{\mathbf{f}}_{k,p}\|^2 - \min_m \|\mathbf{y}-\hat{\mathbf{f}}_{k,m,p}\|^2\Big] &\geq \frac{1}{4}\frac{\big(\frac{1}{k}\sum_{j=k+1}^{2k}\mathbb{E}[\hat\beta^2_{j,p}]\big)^2}{\frac{1}{k}\sum_{j=1}^k \mathbb{E}[\hat\beta^2_{j,p}]}. 
\end{align*}

\subsection{Proofs for Section \ref{subsec:dof_main}}
\label{subsec:dof_appendix}

\begin{proof}[Proof of Theorem \ref{thm:increasingm}]
    We claim that whenever $m' > m$, the sequence $\big\{w_j^{k,m',p}\big\}_{j=1}^p$ majorizes the sequence $\big\{w_j^{k,m,p}\big\}_{j=1}^p$. This means that for each $l \geq  1$,
    \begin{equation} \label{eq:CDF_comparison}
        \sum_{j=1}^l \weight{j}{k,m',p} \geq  \sum_{j=1}^l \weight{j}{k,m,p},
    \end{equation}
    with equality when $l = p$.

    Assuming the majorization in \eqref{eq:CDF_comparison} holds,  the theorem follows directly from Theorem A.3 in Chapter 5 from \cite{marshall11} using, in their notation, $g_i(z) = (\df(\hat f_{i,p}) - \df(\hat f_{i-1,p}))z$.

    We return to proving the claim.
    Let $S_{k,m,p}$ denote the indices of the features selected by one inner loop of EFS with parameters $k$, $m$, and $p$.
    We may write
    \begin{equation}
    \begin{split}
        \sum_{j=1}^l \weight{j}{k,m,p} & = \sum_{j=1}^l \mathbb{P}(j \in S_{k,m,p})\\
        & = \E\left[ \sum_{j=1}^l\mathbb{1}(j \in S_{k,m,p})\right] \\
        & = \E\left[ Z_{k,m,p}(l)\right],
    \end{split}
    \end{equation}
    where $Z_{k,m,p}(l) = |S_{k,m,p}\cap[l] |$.
   Evidently, \eqref{eq:CDF_comparison} will be established if we can produce a coupling between $S_{k,m,p}$ and $S_{k,m',p}$ such that  $Z_{k,m'p}(l) \geq Z_{k,m,p}(l)$ for $l =1,2,\ldots,p$.
    
    We construct such a coupling by running two instances of EFS concurrently, one with parameter choice $m$ and the other with parameter choice $m'$.
    At iteration $k+1$ of the algorithm, we first draw a subset $B_m$ of size $m$ from $[p-k]$ uniformly at random.
    We then draw a further subset of $m'-m$ elements from $[p-k]\backslash B_m$ uniformly at random and take their union with $B_m$ to form $B_{m'}$.
    We then construct the candidate subset $\allowedset_{k+1,m,p}$ by ordering $[p]\backslash S_{k,m,p}$ from smallest to largest and picking the elements at positions in $B_m$.
    To construct the candidate subset $\allowedset_{k+1,m',p}$, we do the same, but with $[p]\backslash S_{k,m',p}$ and $B_{m'}$ instead of $[p]\backslash S_{k,m,p}$ and $B_m$ respectively.
    
    We now show the desired property inductively in $k$, with the base case $k=0$ trivial.
    Assume the inductive hypothesis. 
    Let $b = \min B_m$.
    Then one can show that the index selected at iteration $k+1$ (with parameter $m$) is equal to $l_* = \min\braces*{l \colon b \leq l - Z_{k,m,p}(l)}$.
    We therefore have
    \begin{equation}
        Z_{k+1,m,p}(l) = \begin{cases}
            Z_{k,m,p}(l) & \text{if}~l < l_* \\
            Z_{k,m,p}(l) + 1 & \text{if}~l \geq l_*.
        \end{cases}
    \end{equation}
    Similarly, denoting $b' = \min B_{m'}$ and letting $l_*'= \min\braces*{l \colon b' \leq l - Z_{k,m',p}(l)}$, we get
    \begin{equation}
        Z_{k+1,m',p}(l) = \begin{cases}
            Z_{k,m',p}(l) & \text{if}~l < l_*' \\
            Z_{k,m',p}(l) + 1 & \text{if}~l \geq l_*'.
        \end{cases}
    \end{equation}
    To compare $Z_{k+1,m',p}$ and $Z_{k,m,p}$, we further set $\tilde{l} = \min\braces*{l \colon b \leq l - Z_{k,m',p}(l)}$ and construct the function $\tilde{Z}$ via
    \begin{equation}
        \tilde{Z}(l) = \begin{cases}
            Z_{k,m',p}(l) & \text{if}~l < \tilde{l} \\
            Z_{k,m',p}(l) + 1 & \text{if}~l \geq \tilde{l}.
        \end{cases}
    \end{equation}
    Since $b' \leq b$, we have $\tilde{l} \geq l_*'$, so that $\tilde{Z} \leq Z_{k+1,m',p}$.
    Next, notice that 
    \begin{equation}
        \tilde{Z}(l) - Z_{k+1,m,p}(l) = \begin{cases}
            Z_{k,m',p}(l) - Z_{k,m,p}(l) & \text{if}~l < l_* \\
            Z_{k,m',p}(l) - Z_{k,m,p}(l) - 1 & \text{if}~ l_* \leq l < \tilde{l} \\
            Z_{k,m',p}(l) - Z_{k,m,p}(l) & \text{if}~ l \geq \tilde{l}.
        \end{cases}
    \end{equation}
    By the induction hypothesis, we have $Z_{k,m',p}(l) - Z_{k,m,p}(l) \geq 0$.
    Meanwhile, on $l_* \leq l < \tilde{l}$, write
    \begin{equation} \label{eq:df_concave_helper}
        Z_{k,m',p}(l) - Z_{k,m,p}(l) = (Z_{k,m',p}(l)-l) - (Z_{k,m,p}(l) - l).
    \end{equation}
    Because these functions take integral values, the only way $Z_{k,m',p}(l) - Z_{k,m,p}(l) - 1$ can be negative is if the right hand side of \eqref{eq:df_concave_helper} is equal to 0.
    However, by the way in which $l_*$ is defined, we would have
    \begin{equation}
        l - Z_{k,m',p}(l) = l - Z_{k,m,p}(l) \geq b,
    \end{equation}
    which contradicts the definition of $\tilde{l}$.
    This implies that $Z_{k+1,m,p} \leq \tilde{Z}$, which when combined with the earlier comparison, gives $Z_{k+1,m,p} \leq Z_{k+1,m',p}$ as desired.
\end{proof}

Utilizing the majorization in \eqref{eq:CDF_comparison}, we now turn to prove Theorem \ref{thm:shrinkage}.

For the remaining proofs in this section, we depart from the notation used in the rest of this paper, and let $\bx_{\cdot 1,p},\bx_{\cdot 2,p},\ldots,\bx_{\cdot p,p}$ denote a \emph{fixed} ordering of the features.
Correspondingly, their regression coefficients $\hat\beta_{1,p},\hat\beta_{2,p},\ldots,\hat\beta_{p,p}$ will no longer be assumed to be in sorted order.

\begin{proof}[Proof of Proposition \ref{prop:non_monotonic_df}]
    Consider any pair of distinct feature indices, $i \neq j$.
    Then 
    \begin{equation}
    \hat\beta_{i,p} - \hat\beta_{j,p} = (\beta_{i,p} - \beta_{j,p}) + \langle\bx_{\cdot i,p} - \bx_{\cdot j, p},\boldsymbol{\varepsilon} \rangle.
    \end{equation}
    The second term has the distribution $\mathcal{N}(0,2\sigma^2/n)$.
    In particular, if $i \leq k$ and $j > k$, we have
    \begin{equation}
        \mathbb{P} \left( \hat\beta_i - \hat\beta_j < 0 \right) \leq \exp(-n/4\sigma^2).
    \end{equation}
    Similarly, we have
    \begin{equation}
        \mathbb{P} \left( \hat\beta_i + \hat\beta_j < 0 \right) \leq \exp(-n/4\sigma^2).
    \end{equation}
    Putting these together gives
    \begin{equation}
    \begin{split}
        \mathbb{P} \left( |\hat\beta_{j,p}| > |\hat\beta_{i,p}| \right) & \leq \mathbb{P} \left( |\hat\beta_{j,p}| > \hat\beta_{i,p} \right)\\
        & \leq \mathbb{P} \left( \hat\beta_{j,p} > \hat\beta_{i,p} \right) + \mathbb{P} \left( -\hat\beta_{j,p} < \hat\beta_{i,p} \right) \\
        & \leq 2\exp(-n/4\sigma^2).
    \end{split}
    \end{equation}
    Let $\mathcal{A}$ be the event on which $\rank(|\hat\beta_{j,p}|) \leq k $, for $ j = 1,\ldots,k$.
    We have
    \begin{equation}
    \begin{split}
        \mathbb{P}\left(\mathcal{A} \right) & = \mathbb{P}\left(|\hat\beta_{i,p}| > |\hat\beta_{j,p}|, 1 \leq i \leq k < j \leq p \right) \\
        & \geq 1 - 2k(p-k)\exp(-n/4\sigma^2).
    \end{split}
    \end{equation}

    Next, let $\widetilde{\bH}_k = n^{-1}\sum_{i=1}^k \bx_{\cdot i,k}\bx_{\cdot i,k}^{\top}$.
    This is the projection matrix onto the column span of the first $k$ features.
    Let $\bH_k$ be the smoothing matrix corresponding to $k$ steps of FS, noting that it depends on $\beps = (\epsilon_1,\epsilon_2,\ldots,\epsilon_n)$.
    Observe that on the event $\mathcal{A}$, we have $\widetilde{\bH}_k = \bH_k$.
    We can then write
    \begin{equation}
    \begin{split}
        \df(\hat f_{k,p}) & = \frac{1}{\sigma^2}\E\left[\beps^\top\bH_k\by \right] \\
        & = \frac{1}{\sigma^2}\E\left[\beps^\top\bH_k\by \cdot\mathbb{1}(\mathcal{A})\right] + \frac{1}{\sigma^2}\E\left[\beps^\top\bH_k\by \cdot\mathbb{1}(\mathcal{A}^c)\right] \\
        & = \frac{1}{\sigma^2}\E\left[\beps^\top\widetilde{\bH}_k\by\right] - \frac{1}{\sigma^2}\E\left[\beps^\top\widetilde{\bH}_k\by \cdot\mathbb{1}(\mathcal{A}^c)\right] + \frac{1}{\sigma^2}\E\left[\beps^\top\bH_k\by \cdot\mathbb{1}(\mathcal{A}^c)\right].
    \end{split}
    \end{equation}
    The first term satisfies
    \begin{equation}
        \frac{1}{\sigma^2}\E\left[\beps^\top\widetilde{\bH}_k\by\right] = \frac{1}{\sigma^2}\E\left[\beps^\top\widetilde{\bH}_k\beps\right] = k.
    \end{equation}
    To bound the third term, notice that since $\bH_k$ is a projection matrix, we can bound
    \begin{equation}
    \begin{split}
        \beps^\top\bH_k\by & = \beps^\top\bH_k\beps + \beps^\top\bH_k\bbf \\
        & \leq \|\beps\|^2_2 +  \|\beps\|_2\|\bbf\|_2.
    \end{split}
    \end{equation}
    Using the fact that $\|\bbf\|_2^2 = kn$, we get
    \begin{equation}
        \E\left[(\beps^\top\bH_k\by)^2\right]^{1/2} \leq C(\sqrt{k} + \sigma)\sigma n,
    \end{equation}
    where $C$ is a universal constant.
    Applying Cauchy-Schwarz thus gives us
    \begin{equation}
        \begin{split}
            \frac{1}{\sigma^2}\E\left[\beps^\top\bH\by \cdot\mathbb{1}(\mathcal{A}^c)\right] & \leq \frac{1}{\sigma^2}\E\left[(\beps^\top\bH\by)^2\right]^{1/2}\mathbb{P}\left(\mathcal{A}\right)^{1/2} \\
            & \leq C(\sqrt{k}/\sigma + 1) n k(p-k)\exp(-n/8\sigma^2).
        \end{split}
    \end{equation}
    By a similar calculation, the second term is bounded by the same value.
    Hence,
    \begin{equation}
        \df(\hat f_{k,p}) = k + C(\sqrt{k}/\sigma + 1) n k(p-k)\exp(-n/8\sigma^2).
    \end{equation}

    Meanwhile, let $\tau$ denote the index of the feature selected on the $(k+1)$-th step of FS.
    We then have $\hat f_{k+1,p} - \hat f_{k,p} =  \hat\beta_{\tau,p}x_{\cdot\tau,p}$, which gives
    \begin{equation}
        \df(\hat f_{k+1,p}) - \df(\hat f_{k,p}) = \df(\hat\beta_\tau \psi_\tau).
    \end{equation}
    On the event $\mathcal{A}$, it is easy to see that
    \begin{equation}
        \tau = \argmax_{k+1 \leq j \leq p} \hat\beta_{j,p}^2.
    \end{equation}
    As such, we compute
   \begin{equation}
       \begin{split}
           \df(\hat\beta_\tau \psi_\tau) & = \frac{1}{\sigma^2}\E\left[ (\mathbf{y}-\E y)^\top\hat\beta_{\tau,p}\bx_{\cdot j,p} \right] \\
           & = \frac{n}{\sigma^2}\E\left[ \hat\beta_{\tau,p}^2\right] \\
           & \geq \E\left[ \argmax_{k+1 \leq j \leq p} \frac{n}{\sigma^2}\hat\beta_{j,p}^2\mathbb{1}(\mathcal{A})\right] \\
           & = \E\left[ \argmax_{k+1 \leq j \leq p} \frac{n}{\sigma^2}\hat\beta_{j,p}^2\right] - \E\left[ \argmax_{k+1 \leq j \leq p} \frac{n}{\sigma^2}\hat\beta_{j,p}^2\mathbb{1}(\mathcal{A}^c)\right].
       \end{split}
   \end{equation}
   Since $n\hat\beta_{j,p}^2/\sigma^2 \sim_{\text{IID}} \chi^2_1$ for $k+1 \leq j \leq p$, Lemma \ref{lem:max_chi_squared} implies that the first term is lower bounded by $\frac{1}{2}\log(p-k)$ for $p-k$ large enough.
   For the second term, notice that
   \begin{equation}
       \argmax_{k+1 \leq j \leq p} \frac{n}{\sigma^2}\hat\beta_{j,p}^2 \leq \sum _{k+1 \leq j \leq p} \frac{n}{\sigma^2}\hat\beta_{j,p}^2 = \frac{1}{\sigma^2}\beps^\top \left( \sum_{j=k+1}^p \bx_{\cdot j,p}\bx_{\cdot j,p}^\top\right) \beps \leq \frac{\|\beps\|_2^2}{\sigma^2}.
   \end{equation}
   Applying Cauchy-Schwarz to the second term, we can bound it (in absolute value) from above by
    \begin{equation}
        Cn\exp(-n/8\sigma^2).
    \end{equation}
    Hence,
    \begin{equation}
        \df(\hat f_{k+1,p}) - \df(\hat f_{k,p}) \geq \frac{1}{2}\log(p-k) + Cn\exp(-n/8\sigma^2).
    \end{equation}

    Using \eqref{eq:df_alt_decomp}, we get
    \begin{equation}
        \df(\hat f_{k,m,p}) \geq (w_{k}^{k,m,p} - w_{k+1}^{k,m,p})\df(\hat f_{k+1,p}).
    \end{equation}

   Putting everything together, we therefore get
   \begin{equation}
       \df(\hat f_{k,m,p}) - \df(\hat f_{k,p}) \geq \frac{1}{2}(w_{k}^{k,m,p} - w_{k+1}^{k,m,p})\log(p-k) - k + C(\sqrt{k}/\sigma + 1) n k(p-k)\exp(-n/8\sigma^2).
   \end{equation}
   Noting that Assumption \ref{ass:orthogonal} implies $k \leq p \leq n$ and taking a limit as $n,m,p$ go to infinity finishes the proof.
\end{proof}

\begin{lemma}
\label{lem:max_chi_squared}
    Let $Z_1,Z_2,\ldots,Z_N \sim_{\text{IID}} \chi^2_1$.
    For $N$ large enough, we have
    \begin{equation}
        \E\left[\max_{1 \leq i \leq N} Z_i\right] \geq \frac{1}{2}\log N.
    \end{equation}
\end{lemma}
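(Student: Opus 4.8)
The plan is to lower bound $\E[\max_i Z_i]$ by a one-sided truncation. Since each $Z_i \ge 0$, we have the pointwise inequality $\max_i Z_i \ge t\,\mathbb{1}(\max_i Z_i \ge t)$ for every $t > 0$, so taking expectations and using independence gives
\[
\E\Big[\max_{1 \le i \le N} Z_i\Big] \ge t\,\P\Big(\max_{1 \le i \le N} Z_i \ge t\Big) = t\big(1 - F(t)^N\big),
\]
where $F$ denotes the CDF of the $\chi^2_1$ law. It then suffices to exhibit a threshold $t = t(N)$ of order $\log N$ along which $F(t)^N \to 0$.

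First I would record a tail lower bound for $\chi^2_1$. Writing $Z_1 = G^2$ with $G \sim \mathcal{N}(0,1)$, we have $\P(Z_1 \ge t) = 2\,\P(G \ge \sqrt t)$, and the standard Gaussian tail inequality $\P(G \ge x) \ge \frac{1}{\sqrt{2\pi}}\frac{x}{1+x^2}e^{-x^2/2}$ yields, for all $t \ge 1$,
\[
\P(Z_1 \ge t) \ge \frac{2}{\sqrt{2\pi}}\cdot\frac{\sqrt t}{1+t}\,e^{-t/2} \ge \frac{1}{\sqrt{2\pi}}\cdot\frac{e^{-t/2}}{\sqrt t},
\]
where the last step uses $2\sqrt t/(1+t) \ge 1/\sqrt t$ for $t \ge 1$. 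Choosing $t = \log N$ then gives $\P(Z_1 \ge \log N) \ge c\,N^{-1/2}/\sqrt{\log N}$ with $c = 1/\sqrt{2\pi}$, whence, by $1-x \le e^{-x}$,
\[
F(\log N)^N = \big(1 - \P(Z_1 \ge \log N)\big)^N \le \exp\!\Big(-c\,\sqrt N/\sqrt{\log N}\Big) \longrightarrow 0 \quad \text{as } N \to \infty.
\]
Substituting back, $\E[\max_i Z_i] \ge (\log N)\big(1 - \exp(-c\sqrt N/\sqrt{\log N})\big)$, and since the parenthesized factor exceeds $\tfrac12$ once $N$ is large enough, the claimed bound $\E[\max_i Z_i] \ge \tfrac12 \log N$ follows for all sufficiently large $N$.

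There is no genuine obstacle here; the only point demanding a little care is the bookkeeping of constants in the Gaussian tail estimate, so that the threshold $t = \log N$ already forces $F(t)^N \to 0$. If one prefers extra slack, one can instead take $t = (1-\delta)\log N$ for a fixed small $\delta > 0$ and absorb the resulting loss, since only a factor $\tfrac12$ is needed at the end. An alternative route is to write $\max_i Z_i = (\max_i |G_i|)^2$ and combine Jensen's inequality with the classical fact $\E[\max_i |G_i|] \sim \sqrt{2\log N}$, but the truncation argument above is self-contained and avoids invoking extreme-value asymptotics.
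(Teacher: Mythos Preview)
Your proof is correct and follows essentially the same strategy as the paper's: both set the threshold at $t=\log N$, use a Gaussian tail lower bound to show $1-F(\log N)^N\ge \tfrac12$ for large $N$, and multiply by $\log N$. The only cosmetic differences are that the paper writes the expectation as $\int_0^\infty (1-F(t)^N)\,dt$ and restricts to $[0,\log N]$ via monotonicity (which yields exactly your bound $t(1-F(t)^N)$), and it uses the cruder tail estimate $\mathbb{P}(G\ge z)\ge e^{-z^2}$ rather than your sharper Mills-ratio inequality.
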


\begin{proof}
    \begin{equation}
    \label{eq:max_chi_squared_helper}
        \begin{split}
            \E\left[\max_{1 \leq i \leq N} Z_i\right] & = \int_0^\infty \mathbb{P}\left( \max_{1 \leq i \leq N} Z_i \geq t \right)  dt\\
            & = \int_0^\infty 1 - \mathbb{P}\left( \max_{1 \leq i \leq N} Z_i < t \right) dt\\
            & = \int_0^\infty 1 - \mathbb{P}\left( Z_1 < t \right)^N dt \\
            & = \int_0^\infty 1 - (2\Phi(\sqrt{t})-1)^N dt,
        \end{split}
    \end{equation}
    where $\Phi(z)$ is the CDF of a Gaussian distribution.
    Using the Mills' ratio bound, we have $\Phi(z) \leq 1 - e^{-z^2}$ for $z$ large enough, which would imply that $(2\Phi(z) - 1)^N \leq (1-2e^{-z^2})^N$.
    Since the integrand on the right hand side of \eqref{eq:max_chi_squared_helper} is decreasing, we have, for $N$ large enough, that
    \begin{equation}
    \begin{split}
        \min_{0 \leq t \leq \log N} 1 - (2\Phi(\sqrt{t})-1)^N & \geq 1 - (2\Phi(\sqrt{\log N})-1)^N \\
        & \geq 1 - (1-2/N)^N \\
        & \geq \frac{1}{2}.
    \end{split}
    \end{equation}
    Plugging this lower bound into \eqref{eq:max_chi_squared_helper} and integrating from $0$ to $N$ completes the proof.
\end{proof}


\subsection{Proof of Proposition \ref{thm:shrinkage}}
\label{subsec:proof_of_shrinkage}

According to the decomposition \eqref{eq:decomp_ensemble}, it suffices to prove
\begin{align}
    \sum_{j=1}^p \hat \beta^2_{j,p}\big(w_j^{k,m,p}\big)^2 \leq \sum_{j=1}^p \hat \beta^2_{j,p}\big(w_j^{k,m',p}\big)^2. \label{ineq:square}
\end{align}
This follows immediately from Theorem A.3 in Chapter 5 from \cite{marshall11}, where, in their notation, we take $g_i(z) = \beta^2_{i,p}z^2$.

\subsection{Proof of Proposition \ref{prop:escape_local_min}}
\label{subsec:proof_escape_local_min}

Denote the features indices selected by FS($p$) (in order) as $\pi(1),\pi(2),\ldots,\pi(p)$.
Since
\begin{equation}
   \langle \bx_{\cdot i, p}, \by \rangle = \begin{cases}
       \beta & i = 1,2 \\
       \zeta & 3 \leq i \leq p-1 \\
       2\cdot3^{-1/2}\beta & i = p,
   \end{cases}
\end{equation}
we have $\pi(1) = p$.
Next, the first step residual satisfies
\begin{equation}
   \langle \bx_{\cdot i, p}, \br_1 \rangle = \begin{cases}
       \beta/3 & i = 1, 2 \\
       \zeta & 3 \leq i \leq p-1.
   \end{cases}
\end{equation}
Furthermore,
\begin{equation}
\begin{split}
    \|\bP_{\activeset_1}^\perp \bx_{\cdot i, p}\|^2 & = 1 - \langle \bx_{\cdot i, p}, \bx_{\cdot p, p}\rangle^2 \\
    & = \begin{cases}
        2/3 & i = 1,2 \\
        1 & 3 \leq i \leq p-1.
    \end{cases}
\end{split}
\end{equation}
Putting these together, the objective for each feature has the value
\begin{equation}
    \frac{|\langle \bx_{\cdot i, p}, \br_1 \rangle|}{\|\bP_{\activeset_1}^\perp \bx_{\cdot i, p}\|} = \begin{cases}
        6^{-1/2}\beta & i = 1,2 \\
        \zeta & 3 \leq i \leq p-1.
    \end{cases}
\end{equation}
As such, we have $\pi(k) = k-1$ for $k=2,3,\ldots,p$.
This means that for any $k \geq 3$,
\begin{equation}
   \|\by - \hat {\bbf}_{k,p} \|^2 = (p-k)\zeta^2.
\end{equation}

We now study the base estimator of EFS, denoted $\check f_{k,m,p}$, which depends on a random seed $\theta$.
At iteration $l$, suppose $p \notin \mathcal{M}_l$.
Observe that the residual $r_l$ satisfies
\begin{equation}
   \frac{|\langle \bx_{\cdot p, p}, \br_l \rangle|}{\|\bP_{\activeset_1}^\perp \bx_{\cdot i, p}\|} = \begin{cases}
       2\cdot3^{-1/2}\beta & \text{if}~|\lbrace 1, 2 \rbrace \cap \mathcal{M}_l| = 0 \\
       2^{-1/2}\beta & \text{if}~|\lbrace 1, 2 \rbrace \cap \mathcal{M}_l| = 1 \\
       0 & \text{if}~|\lbrace 1, 2 \rbrace \cap \mathcal{M}_l| = 2.
   \end{cases}
\end{equation}
Hence, if both $x_{1,p}$ and $x_{2,p}$ are selected before $x_{p,p}$, then $x_{p,p}$ is never selected.
Denote the feature indices selected by EFS (in order) as $\tau(1),\tau(2),\ldots,\tau(p)$.
Then $\tau^{-1}(j)$ denotes the step at which feature $x_{j,p}$ was selected.
Note that these depend on the random seed $\theta$.
We now decompose the probability space into three events.
Let
\begin{equation}
   \mathcal{E}_1 = \lbrace \tau^{-1}(1),\tau^{-1}(2) \leq \min( \tau^{-1}(p), k )\rbrace,
\end{equation}
\begin{equation}
   \mathcal{E}_2 = \lbrace \tau^{-1}(1),\tau^{-1}(2) \leq k\rbrace \backslash \mathcal{E}_1,
\end{equation}
\begin{equation}
   \mathcal{E}_3 = \left(\mathcal{E}_1\cup\mathcal{E}_2\right)^c.
\end{equation}
We have
\begin{equation}
   \|\mathbf{y} - \check \bbf_{k,m,p} \|^2 \leq  \begin{cases}
       (p-k-1)\zeta^2 & \text{if}~\mathcal{E}_1~\text{holds}, \\
       (p-k)\zeta^2 & \text{if}~\mathcal{E}_2~\text{holds}, \\
       (p-k)\zeta^2 + 2\beta^2 & \text{if}~\mathcal{E}_3~\text{holds}.
   \end{cases}
\end{equation}
We next compute bounds for the event probabilities.
For $l=1,\ldots,k$, let $S_l$ denote the set of $m$ indices sampled at iteration $l$.
Denote $\gamma = m/p$ as before.
By our previous discussion, we have
\begin{equation}
\begin{split}
   \mathbb{P}( \mathcal{E}_1) & \geq \mathbb{P}( 1 \in S_1, 0 \notin S_1, 2 \in S_2, 0 \notin S_2) \\
   & = \frac{\binom{p-2}{m-1}\binom{p-3}{m-1}}{\binom{p}{m}\binom{p-1}{m}} \\
   & = \gamma^2(1-\gamma^2) + O(1/p).
\end{split}
\end{equation}

To compute a bound for $\mathbb{P}( \mathcal{E}_3)$, let us redefine the way $S_l$ is generated as follows:
First draw a subset $\tilde S_l$ of size $m$ from $[p]$, i.e. including indices that have already been selected.
If it contains an index that has already been selected, throw it out and draw it again.
Let $W_l^{(j)} = \mathbb{1}( j \in \tilde{S}_l)$ for $j=1, 2$, $l=1,\ldots,k$.
Then these are Bernoulli random features with probability $\gamma$.
Note also that if $\sum_{l=1}^k W_l^{(j)} \geq 3$, then $\tau^{-1}(j) \leq k$.
We thus have
\begin{equation}
   \begin{split}
       \mathbb{P}( \mathcal{E}_3) & = 1 - \mathbb{P}( \tau^{-1}(1), \tau^{-1}(2) \leq k) \\
       & = \mathbb{P}( \tau^{-1}(1) > k ~\text{or}~\tau^{-1}(2) > k) \\
       & \leq \mathbb{P}( \tau^{-1}(1) > k ) + \mathbb{P}( \tau^{-1}(2) > k ) \\
       & \leq \mathbb{P}\left( \sum_{l=1}^k W_l^{(1)} < 3 \right) + \mathbb{P}\left( \sum_{l=1}^k W_l^{(2)} < 3 \right) \\
       & \leq 2e^{-k\gamma}(ek\gamma/2)^2,
   \end{split}
\end{equation}
where the last inequality follows from Chernoff's inequality.

Putting everything together, and using Jensen's inequality, we compute
\begin{equation}
   \begin{split}
        \E\big[\|\by - \hat \bbf_{k,p} \|^2\big] - \E\big[\|\by - \check \bbf_{k,m,p} \|^2\big] & \geq \zeta^2  \mathbb{P}( \mathcal{E}_1) - 2\beta^2\mathbb{P}( \mathcal{E}_3) \\
        & \geq \zeta^2(1-\gamma^2) \gamma^2 - 4\beta^2e^{-k\gamma}(ek\gamma/2)^2 - O(1/p).
    \end{split}
\end{equation}
For $k$ and $p$ large enough, this last quantity is strictly positive.
\end{document}